\newif\iffinal 
\newif\ifIR
  \newcommand{\hans}[1]{}
  \newcommand{\nico}[1]{}
  \newcommand{\michi}[1]{}
  \newcommand{\martin}[1]{}
  \newcommand{\chris}[1]{}
  \newcommand{\ph}[1]{}
  \newcommand{\yell}[1]{}
  \newcommand{\davidD}[1]{}
  \newcommand{\hans}[1]{{\color{blue} \textbf{[Hans: #1]}}}
  \newcommand{\nico}[1]{{\color{orange} \textbf{[Nico: #1]}}}
  \newcommand{\michi}[1]{{\color{purple} \textbf{[Michi: #1]}}}
  \newcommand{\martin}[1]{{\color{olive} \textbf{[Martin: #1]}}}
  \newcommand{\chris}[1]{{\color{cyan} \textbf{[Chris: #1]}}}
  \newcommand{\ph}[1]{{\color{green} \textbf{[Phil: #1]}}}
  \newcommand{\yell}[1]{{\color{red} \MakeUppercase{\textbf{[#1]}}}}
  \newcommand{\davidD}[1]{{\color{cyan} \textbf{[David Duvenaud: #1]}}}
  \newcommand{\hansIR}[1]{}
  \newcommand{\nicoIR}[1]{}
  \newcommand{\michiIR}[1]{}
  \newcommand{\martinIR}[1]{}
  \newcommand{\chrisIR}[1]{}
  \newcommand{\phIR}[1]{}
  \newcommand{\yellIR}[1]{}
  \newcommand{\hansIR}[1]{{\color{blue} \textbf{[Hans: #1]}}}
  \newcommand{\nicoIR}[1]{{\color{orange} \textbf{[Nico: #1]}}}
  \newcommand{\michiIR}[1]{{\color{purple} \textbf{[Michi: #1]}}}
  \newcommand{\martinIR}[1]{{\color{olive} \textbf{[Martin: #1]}}}
  \newcommand{\chrisIR}[1]{{\color{cyan} \textbf{[Chris: #1]}}}
  \newcommand{\phIR}[1]{{\color{green} \textbf{[Phil: #1]}}}
  \newcommand{\yellIR}[1]{{\color{red} \MakeUppercase{\textbf{[#1]}}}}
\pgfplotsset{compat=1.12}
\definecolor{shadecolor}{RGB}{230,230,250}
\definecolor{plotblue}{RGB}{57, 106, 177}
\definecolor{plotorange}{RGB}{218, 124, 48}
\definecolor{plotgreen}{RGB}{62, 150, 81}
\definecolor{plotred}{RGB}{204, 37, 41}
\definecolor{plotgray}{RGB}{83, 81, 84}
\definecolor{plotpurple}{RGB}{107, 76, 154}
\definecolor{plotdarkred}{RGB}{146, 36, 40}
\definecolor{plotbeige}{RGB}{148, 139, 61}
\definecolor{plotnavy}{HTML}{001F3F}
\pgfplotsset{compat = 1.3}
\pgfplotsset{cycle list name = mycolorlist}
\pgfplotsset{every axis plot/.append style={very thick}}
\pgfplotsset{grid = major, major grid style = {thin}}
\pgfplotsset{ticks = major, tick align=inside, tick style = {thin, gray}}
\pgfplotsset{compat=1.11}
\pgfplotsset{width=\columnwidth, height=0.6\columnwidth}
\pgfplotsset{legend style={at={(1,0)},xshift=-0.05cm, yshift=0.05cm, anchor=south east }}
\pgfplotsset{
  colormap={whitered}{color(0cm)=(plotorange!75!plotred); color(1cm)=(white)},
}
\pgfplotsset{
tick label style = {font=\sansmath\sffamily\tiny},
}
\newcommand{\R}{\mathbb{R}}
\newcommand{\N}{\mathbb{N}}
\let\given\givenbase
\newcommand{\de}{\partial}  
\newcommand*{\defeq}{\coloneqq}
\newcommand*{\BO}{\mathcal{O}}
\newcommand*{\rd}{\mathrm{d}}
\newcommand*{\bs}{\boldsymbol}
\newtheorem{theorem}{Theorem}
\newtheorem{lemma}[theorem]{Lemma}
\newtheorem{assumption}{Assumption}
\newcommand{\dk}{\tensor[^{\de}]{k}{}}
\newcommand{\kd}{\tensor{k}{^{\de}}}
\newcommand{\dkd}{\tensor[^{\de}]{k}{^{\de}}}
\newcommand{\dKd}{\tensor[^{\de}]{K}{^{\de}}}
\newcommand{\dKid}{\tensor[^{\de}]{K}{^{\de}}(h:t_i)}
\newcommand{\dKTd}{\tensor[^{\de}]{K}{^{\de}}(h:T)}
\newcommand\SmallbMatrix[1]{{%
  \tiny\arraycolsep=0.3\arraycolsep\ensuremath{\begin{bmatrix}#1\end{bmatrix}}}}
\newcommand\norm[1]{{
    \left \Vert #1 \right \Vert
}}
\newcommand\absval[1]{{
    \left \vert #1 \right \vert
}}
\newcommand\transJac[1]{
\left [D #1 \right ]^{\intercal}
}
\newcommand{\smallddots}{\vphantom{\int\limits^x}\smash{\ddots}}
\DeclareMathOperator{\Tr}{trace}
\newcommand{\interior}[1]{%
  {\kern0pt#1}^{\mathrm{o}}%
}
\newlength{\figwidth}
\newlength{\figheight}
\crefname{supp}{Supplement}{Supplements}
\icmltitlerunning{Likelihoods for `Likelihood-Free' Dynamical Systems}
\begin{document}

\twocolumn[
\icmltitle{Differentiable Likelihoods for Fast Inversion \\
           of `Likelihood-Free' Dynamical Systems}



\icmlsetsymbol{primary}{*}

\begin{icmlauthorlist}
  \icmlauthor{Hans Kersting}{primary,ekut,mpi,bcai}
  \icmlauthor{Nicholas Kr{\"a}mer}{primary,ekut}\\
  \icmlauthor{Martin Schiegg}{bcai}
  \icmlauthor{Christian Daniel}{bcai}
  \icmlauthor{Michael Tiemann}{bcai}
  \icmlauthor{Philipp Hennig}{ekut,mpi}
\end{icmlauthorlist}

\icmlaffiliation{ekut}{University of T\"ubingen, T\"ubingen, Germany}
\icmlaffiliation{mpi}{Max Planck Institute for Intelligent Systems, T\"ubingen, Germany}
\icmlaffiliation{bcai}{Bosch Center for Artificial Intelligence, Renningen, Germany}

\icmlcorrespondingauthor{Hans Kersting}{hans.kersting@uni-tuebingen.de}
\icmlcorrespondingauthor{Nicholas Kr\"amer}{nicholas.kraemer@uni-tuebingen.de}

\icmlkeywords{Inverse problems, dynamical systems, probabilistic numerics}

\vskip 0.3in
]



\printAffiliationsAndNotice{$^*$Primary author} 

\begin{abstract}
Likelihood-free (a.k.a.~simulation-based) inference problems are inverse problems with expensive, or intractable, forward models. 
ODE inverse problems are commonly treated as likelihood-free, as their forward map has to be numerically approximated by an ODE solver. 
This, however, is not a fundamental constraint but just a lack of functionality in classic ODE solvers, which do not return a likelihood but a point estimate.
To address this shortcoming, we employ Gaussian ODE filtering (a probabilistic numerical method for ODEs) to construct a local Gaussian approximation to the likelihood.
This approximation yields tractable estimators for the gradient and Hessian of the (log-) likelihood.
Insertion of these estimators into existing gradient-based optimization and sampling methods engenders new solvers for ODE inverse problems. 
We demonstrate that these methods outperform standard likelihood-free approaches on three benchmark-systems.
\end{abstract}

\davidD{In the abstract, it's not clear if the likelihood is coming from uncertainty in the solve, or some other noise source.}
\chrisIR{Abstract und Intro: zielen mir persoenlich zu schnell auf technische Details, ohne dass ich das Gefuehl habe den Rahmen vollstaendig verstanden zu haben. 
In der Intro gibt es zB mehrere Forwardpointer auf Gleichungen des Hauptteils, das finde ich eher verwirrend. Gleichzeitig wird mir der konkrete Nutzen und die Usecases nicht deutlich genug.}

\section{Introduction}

\hansIR{Use the seminal paper \citet{DiggleGratton_MCMCofInferenceforImplicitStatisticalModels_1984} and \citet{cranmer_sbi_2019} in the introduction.}
\martinIR{The introduction should include more introductory notation. You could also give a little bit more context for readers who are not directly working in this field. See comments in highlighted phrases about the notation that should be introduced (either formally or by giving more context).}

Inferring the parameters of dynamical systems that are defined by ordinary differential equations (ODEs) is of importance in almost all areas of science and engineering.
Despite the wide range of available ODE inverse problem solvers, simple random-walk Metropolis methods remain the go-to solution; see e.g.~\citet[Section 2.4]{tarantola_invprob_2005}.
That is to say that ODE inverse problems are routinely treated as if their forward problems were black boxes.
The reason usually cited for this generic approach is that ODE forward solutions are highly non-linear and numerically intractable for all but the most trivial cases.
\hans{Maybe cite \citet{VyshemirskyGirolami2008} here? They argue this.}
Therefore, it is common to consider ODE inverse problems as `likelihood-free' inference (read: intractable likelihood)---a.k.a.~simulation-based inference or, in the Bayesian case, Approximate Bayesian Computation (ABC); see \citet{cranmer_sbi_2019} for an up-to-date examination of these closely-related areas.
\\
We here argue that, at least for ODEs, this approach is mistaken.
If a dynamical system is accurately described by an ODE, its explicit mathematical definition should be exploited to design efficient algorithms---not ignored and treated as a black-box, likelihood-free inference problem.
\\
To this end, we construct a local Gaussian approximation of the likelihood by Gaussian ODE Filtering, a probabilistic numerical method (PNM) for ODE forward problems. 
(\Cref{suppl:introduction_ODE_Filtering} provides a concise introduction to Gaussian ODE filtering; \citet{TronarpKSH2019} offer a more detailed presentation. See \citet{HenOsbGirRSPA2015} or \citet{OatesSullivan2019} for a broad introduction to PNMs.)
\begin{figure}[t]
\setlength{\figwidth}{.9\columnwidth}
\setlength{\figheight}{.5\figwidth}
\centering\scriptsize
\includegraphics{./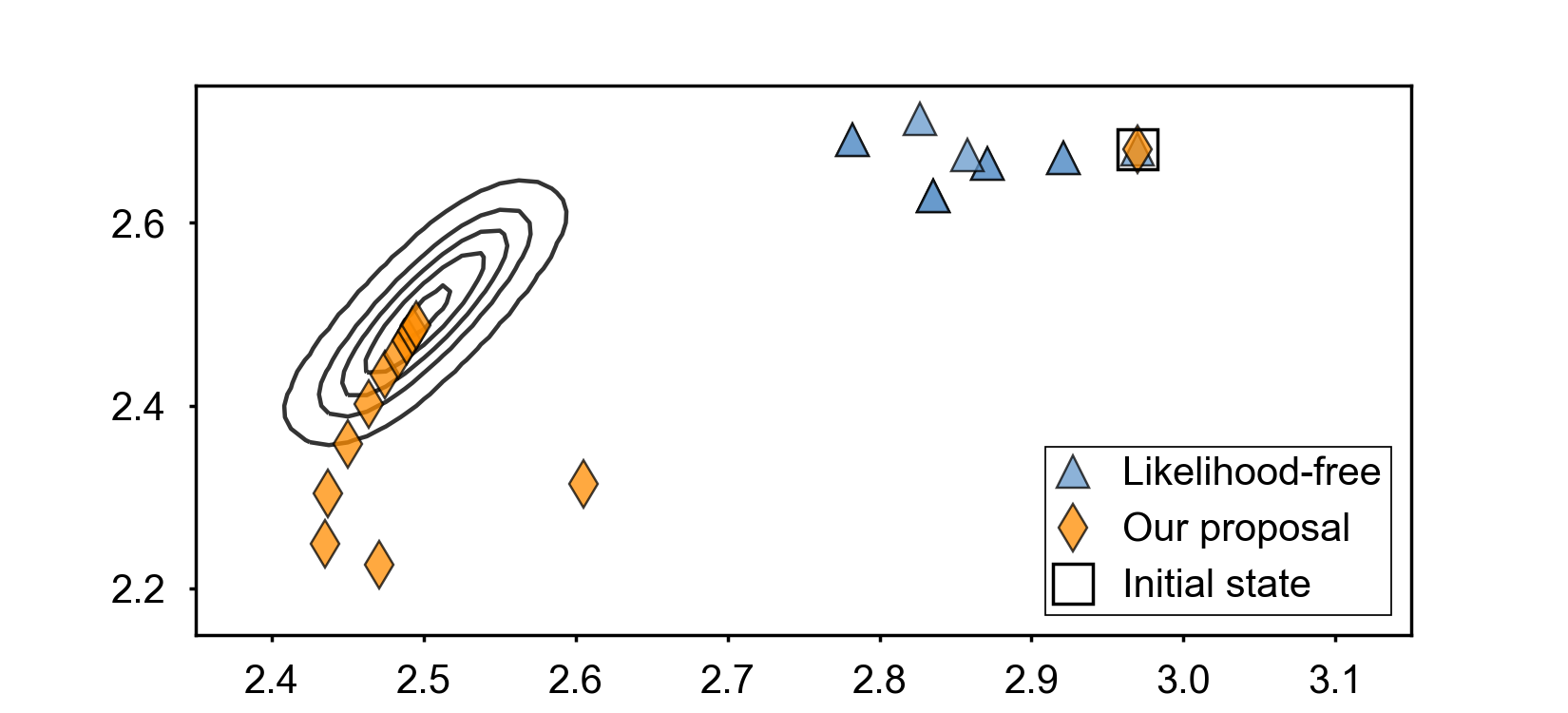}
\caption{Inference on the logistic ODE. First twelve sampled parameters of
likelihood-free inference and our proposed method. Details in text. \label{figure:our_proposal_sampling}
\davidD{What are the contours?}
}
\end{figure}
The key insight of our work is that there \emph{is} a likelihood in simulations of ODEs, and in fact it can be approximated cheaply, and analytically: The mean estimate $\bs{m}_{\theta}$ of the forward solution computed by Gaussian ODE filters can be linearized in the parameter $\theta$, so that gradient, Hessian, etc.~of the approximated log-likelihood can---via a cheap estimator $J$ of the Jacobian of the map $\theta \mapsto \bs{m}_{\theta}$---be computed in closed form (\Cref{sec:gradient_and_hessian_estimators}).
In this way, the probabilistic information from Gaussian ODE filtering yields a tractable, twice-differentiable likelihood for `likelihood-free' ODE inverse problems. This enables the use of first and second-order optimization or sampling methods (see \Cref{figure:our_proposal_sampling}).
\\
Much thought has been devoted to improving the slow run-times of ODE inverse inference---which is due to the laborious explicit numerical integration per parameter.
In machine learning, e.g.,~authors have proposed to reduce the amount of necessary parameters by active learning with Gaussian process (GP) surrogate likelihoods \citep{meeds_welling_GPSABC}, or even to avoid numerical integration altogether by gradient matching \citep{calderhead2008accelerating}.
This paper adds a new way to reduce the amount of parameters by employing gradient (and Hessian) estimates of the log-likelihood. 

{\bf Contributions} \ 
The main contributions are twofold: 
\emph{Firstly}, we introduce tractable estimators for the gradients and Hessian matrices of the log-likelihood of ODE inverse problems by Gaussian ODE filtering.
To derive these estimators, we construct a new estimator $J$ for the Jacobian of the forward map.
We theoretically support the use of $J$ by a decomposition of the true Jacobian into $J$ and a sensitivity term $S$ (see \Cref{theorem:true_Jacobian_of_mtheta}), as well as an upper bound on its approximation error (see \Cref{theorem:bound_on_approximation_error}).
\emph{Secondly}, we propose a range of new solvers which require gradients and/or Hessians, by inserting these estimators into first and second-order optimization and sampling methods.
The utility of these algorithms is demonstrated by experiments on three benchmark ODEs where they outperform their gradient-free counterparts.

\section{Problem Setting}  \label{sec:ProblemSetting}

\martinIR{This equation (without assumption 1) should maybe go into the introduction. Then you can reuse the equation to introduce the notation/context.}
We consider a dynamical system defined by the ODE
\begin{align} \label{eq:ODE}
  \dot x(t) 
  = 
  f \left(x(t), \theta \right ), 
  \qquad 
  x(0)=x_0 \in \R^d,
\end{align}
on the finite time domain $t \in [0, T]$ for some $T>0$, with parametrized vector field $f : \R^d \times \R^n \to \R^d $.
We restrict our attention to choices of $f$ satisfying the following
\begin{assumption}  \label{ass:f_linear_in_theta}
  $f(x,\theta) = \sum_{i=1}^n \theta_i f_i(x)$, for some continuously differentiable $f_i: \R^d \to \R^d$, for all $i = 1,\dots,n$. 
\end{assumption}
\hans{This implies existence of $x$ on $[0,T]$ by the Picard--Lindel\"of theorem as given by \citep[Theorem 8.13]{KelleyPeterson2010TheoryOfODEs}.}
The necessity for this assumption will become evident in \Cref{subsubsection:TheFilteringMean}.
It is not very restrictive: e.g.~the corresponding assumption in \citet[eq. (10)]{GorbachBauerBuhmann17} is stronger.
In fact, most standard ODEs collected in \citet[Appendix I]{hull1972comparing}, a standard set of ODE benchmarking problems, satisfy \Cref{ass:f_linear_in_theta} either immediately or after reparametrization.
Otherwise, we can still transform a non-conforming ODE into a system that obeys \Cref{ass:f_linear_in_theta}, as exemplified for the protein signalling transduction pathway in \Cref{subsubsec:exp_pst}.
While this adds an additional layer of imprecision, the experiments appear to be equally good---which suggests a wider applicability of our methods than \Cref{ass:f_linear_in_theta}.

If the initial value $x_0$ is unknown too (as is often the case in practice), it can be treated as a parameter by defining a new parameter vector $\begin{pmatrix} x_0^{\intercal},\theta^{\intercal}\end{pmatrix}^{\intercal} \in \R^{d+n}$; see \cref{eq:mean_linear_in_param}.
Solving \cref{eq:ODE}, for a given $\theta$, with a numerical method is known as the \emph{forward problem}.
\\
For the \emph{inverse problem}, we assume the dynamical system described by \cref{eq:ODE} with \emph{unknown} true parameter $\theta^{\ast}$. The true trajectory $x = x_{\theta^{\ast}}$ is observed under additive, zero-mean Gaussian noise at $M$ discrete times $0 \leq t_1 < \dots < t_M \leq T$:
\begin{align}   \label{eq:def_data}
  z(t_i) 
  \defeq 
  x(t_i) + \varepsilon_i \in \R^d,
  \quad
  \varepsilon_i \sim \mathcal N(0,\Sigma_i),
\end{align} 
for all $i \in \{1,\dots,M\}$.
Below we assume, w.l.o.g., that $\Sigma_i = \Sigma$, for all $i \in \{1,\dots,M\}$.
We define the stacked data across $M$ time points and $d$ dimensions as
\begin{align*}
  \bs{z} \defeq \begin{bmatrix} z_1(t_1),\allowbreak \dots, z_1(t_M),\allowbreak \dots, z_d(t_1),\allowbreak \dots, z_d(t_M) \end{bmatrix}^{\intercal},
\end{align*}
and analogously, for all $\theta \in \Theta$, the true solution at these points as $\bs{x}_{\theta}$.
The inverse problem consists of inferring the parameter $\theta^{\ast}$ that generated the data through \cref{eq:def_data}.
For the sake of readability, we will assume w.l.o.g.~that $d=1$; this restriction is purely notational as can be seen from the multi-dimensional experiments below.
Under these conventions, \cref{eq:def_data} is equivalent to 
\begin{align}   \label{eq:p_z_given_x}
  p(\bs{z} \given \bs{x})
  =
  \mathcal{N}\left(\bs{z};  \bs{x}, \sigma^2 I_M \right)
\end{align}
for some $\sigma^2 > 0$, where $I_M$ is the $M\times M$ identity matrix.
Heteroscedastic noise can be modelled by replacing $\sigma^2 I_M$ with a diagonal matrix with varying diagonal entries.


\section{Likelihoods by Gaussian ODE Filtering}   \label{sec:likelihoods_by_GODEF}
\chrisIR{Problem Setting \& Method: Gefuehlt gibt es relativ viele Nebenbaustellen oder Seitenargumente, die mich teilweise etwas von der Hauptstoryline abgelenkt haben. Vllt waere es eine Ueberlegung Wert die Annahmen und die eigentliche Methodenbeschreibung staerker zu trennen um eine moeglichst kompakte und eingaengige Uebersicht des vorgeschlagenen Ansatzes zu erreichen. }
\phIR{Stimmt! Alternativ koennte auch am Anfange (Section 1) eine konzeptionelle Einfuehrung stehen.}
\hansIR{@Philipp: Ich bin mir nicht sicher, ob ich deinen Vorschlag richtig verstehe. Meinst du, dass wir zunaechst eine konzeptionelle und dann eine technische Einleitung schreiben?}
\martinIR{I agree with Chris and Philipp. Sections 1-3 (until excluding 3.1) should be restructured for a more fluent story. Proposal:
- Show equation 1 and explain the high-level task.
- With the help of equation 1 introduce the necessary notations ("forward problem", "inverse problem",...)
- Talk about challenges and shortcomings of the previous methods.
- Briefly summarize your method (high-level without math).
- Collect the contributions and state why they address the challenges/shortcomings of previous approaches.
Then in a second part, you could be a bit more specific and introduce a bit more math.} 
The prevailing view on the uncertainty in inverse problems only considers the aleatoric uncertainty $\Sigma_i$ from \cref{eq:def_data} and ignores the epistemic uncertainty over the quality of the employed numerical approximation $\hat{x}_{\theta}$ of $x_{\theta}$.
In other words, the likelihood of the forward problem, $p(\bs{x}_{\theta} \given \theta)$, is commonly treated as a Dirac distribution $\delta(\bs{x}_{\theta} - \bs{\hat x}_{\theta} )$ which yields the \emph{uncertainty-unaware likelihood}
\begin{align}
  p(\bs{z}\given \theta) 
  &= 
  \int p(\bs{z}\given \bs{x}_{\theta}) p(\bs{x}_{\theta} \given \theta) \ \rd \bs{x}_{\theta}
  \label{subeq_a:likelihood}
  \\
  &=
  \int p(\bs{z}\given \bs{x}_{\theta}) \delta(\bs{x}_{\theta} - \bs{\hat x}_{\theta} ) \ \rd \bs{x}_{\theta}
  \\
  &\stackrel{\cref{eq:p_z_given_x}}{=}
  \mathcal{N}\left(\bs{z}; \bs{\hat x}_{\theta}, \sigma^2 I_M \right).
  \label{subeq_b:likelihood}
\end{align}
as the `true' intractable likelihood.
\davidD{That likelihood also represents uncertainty, just not from one specific source, which is numerical error.}
\davidD{I would frame your method as augmenting the standard likelihood with extra uncertainty due to numerical error.  Throughout the first two sections it's a bit unclear if and when there is already a likelihood.  Also, I think your method would still apply even if the original noise likelihood wasn't Gaussian.}
This, however, ignores the epistemic uncertainty over the accuracy $\hat x_{\theta}$ which leads to overconfidence.
This uncertainty is due to the discretization error of the numerical solver used to compute $\hat x_{\theta}$, and can only be avoided for the most trivial ODEs.
This problem has previously been recognized in, e.g., \citet[Section 3.2]{conrad_probability_2017} and \citet[Section 8]{AbdulleGaregnani17} who, as a remedy, construct a `cloud' of possible solutions by running a classical solver multiple times with a prespecified accuracy. 
This, unfortunately, requires the computational invest of several forward solves for the same $\theta$, which could instead be used for additional $\theta$, or higher accuracy.
\\
To obtain such uncertainty quantification more cheaply, we employ Gaussian ODE filtering with a once-integrated Brownian motion (IBM) prior on $x$; see \Cref{suppl:subsec:GaussianODEFiltering} for a short introduction.
This amounts---e.g.~in the notation of \citet{TronarpKSH2019}---to setting $q=1$. 
Gaussian ODE filtering has the advantage over other numerical solvers, probabilistic or classical, that we can compute gradients of the likelihood, as demonstrated below.
For a given $\theta$, the Gaussian ODE filter computes a multivariate normal distribution over $x_{\theta}$ at a set of $N=T/h$, for notational simplicity, equidistant time points $\{0, h, \dots, Nh\}$ with step size $h>0$.
This set is, w.l.o.g., assumed to contain the data time points $\{t_1,\dots,t_M\}$ from \cref{eq:def_data}, i.e.~we assume the existence of a set of integers $\{l_1,\dots,l_M\}$ such that $t_i = l_i h$. (The  w.l.o.g.~assumption can otherwise be satisfied by interpolating along the dynamic model; see \cref{eq:dynamic_model} in \Cref{suppl:introduction_ODE_Filtering}.)
\subsection{The Filtering Distribution}
The Gaussian ODE filter returns the so-called (posterior) filtering distribution over the ODE solution $\bs{x}_{\theta}$, given by
\begin{align}    \label{eq:filtering_distribution}
  p(\bs{x}_{\theta} \given \theta)
  =
  \mathcal{N}(\bs{x}_{\theta}; \bs{m}_{\theta}, \bs{P} ),
\end{align}
with $\bs{m}_{\theta} \in \R^M$ and $\bs{P} \in \R^{M \times M}$ given below by \cref{eq:mean_linear_in_param} and \cref{eq:filtering_cov}, respectively.
This probabilistic likelihood yields the new \emph{uncertainty-aware likelihood}
\begin{align}
    p(\bs{z}\given \theta)
    &=
    \int p(\bs{z}\given \bs{x}_{\theta}) \mathcal{N}(\bs{x}_{\theta}; \bs{m}_{\theta}, \bs{P} ) \ \rd \bs{x}_{\theta}
    \\
    &\stackrel{\mathclap{\cref{eq:p_z_given_x}~}}{=}
    \mathcal{N}(\bs{z}; \bs{m}_{\theta}, \bs{P} + \sigma^2 I_M )
    \label{subeq:uncertainty_aware_likelihood}
\end{align}
which has two advantages over the uncertainty-unaware likelihood from \cref{subeq_b:likelihood}:
\begin{enumerate}
    \item The filtering mean $\bs{m}_{\theta}$ can be linearized in $\theta$, as specified below in \cref{eq:mean_linear_in_param}. This yields an estimate $J$ of the Jacobian matrix of $\theta \mapsto \bs{m}_\theta$ which implies estimators of gradients and Hessian matrices of the likelihood; see \cref{eq:gradient_estimator_given_by_J,eq:Hessian_estimator_given_by_J}.
    These estimators are useful to guide samples of $\theta$ into regions of high likelihood by the gradient-based sampling and methods defined in \Cref{sec:novel_gradientbased_algorithms} below.
    \item The variance $\bs{P}$ captures the average-case squared (epistemic) error $\norm{\bs{m}_{\theta} - \bs{x}_{\theta}}^2$, and can be added to the (aleatoric) variance $\Sigma_i$; see \cref{subeq:uncertainty_aware_likelihood}. Unless $\bs{P} \ll \sigma^2 I_M$, this prevents over-confidence, as visualized in \Cref{fig:uncertainty-(un)awareness}.
\end{enumerate}
\begin{figure}[ht]
\centering\scriptsize
\includegraphics{./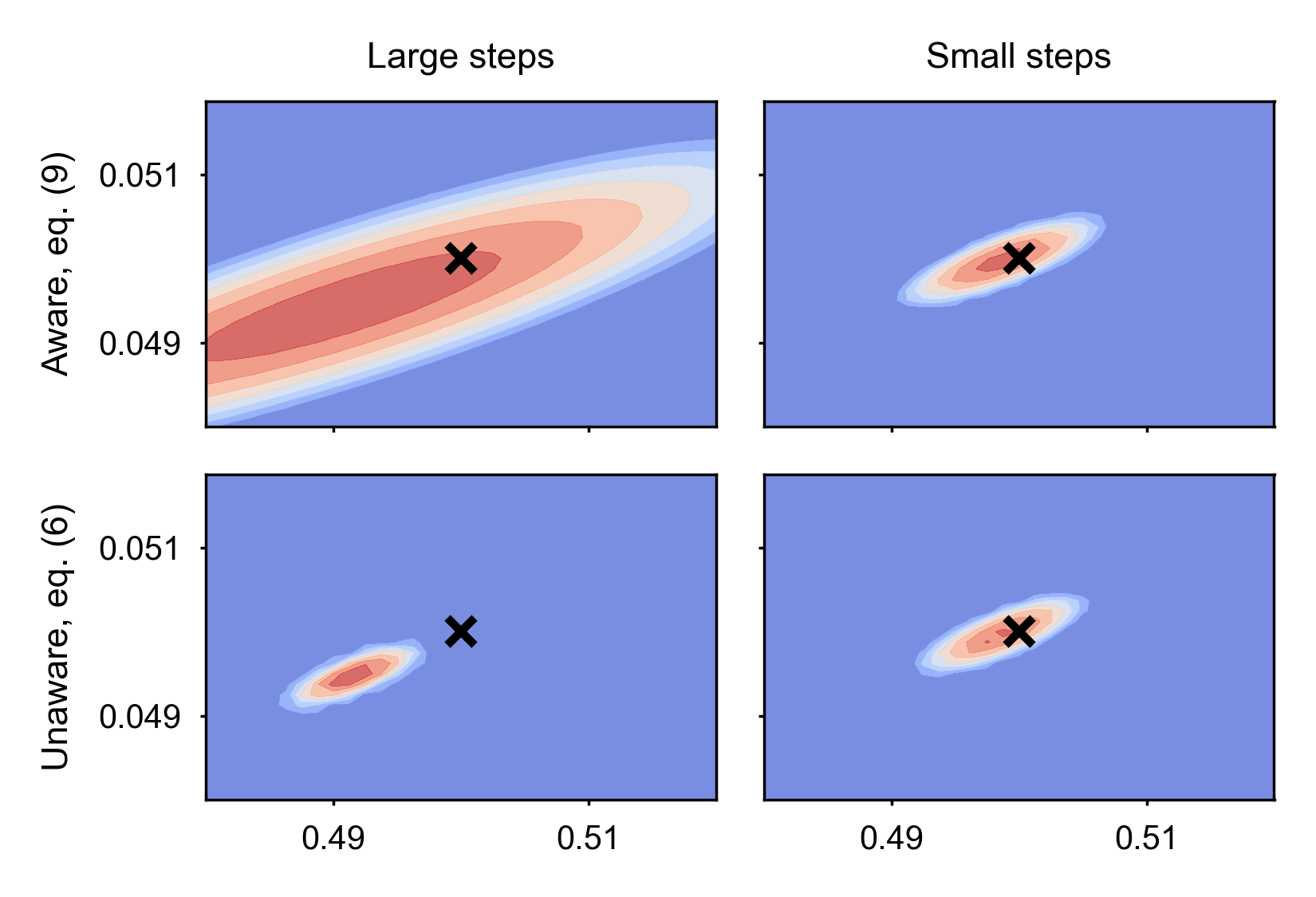}
\caption{Uncertainty-(un)aware likelihoods, \cref{subeq_b:likelihood,subeq:uncertainty_aware_likelihood} w.r.t.~$(\theta_1, \theta_2)$ of Lotka-Volterra ODE, \cref{eq:lotka_volterra}, with fixed $(\theta_3, \theta_4) = (0.05, 0.5)$. $\theta_1$ on $x$ and $\theta_2$ on $y$-axis. Black cross is true parameter.
The unaware likelihood is overconfident for the large step size ($h=0.2$), i.e.~for large $\bs{P}$, while the aware likelihood has calibrated uncertainty.
For the small step size ($h=0.025$) this effect is less pronounced as $\bs{P}$ is small.
\hansIR{Cf.~\citet[Fig.~9]{AbdulleGaregnani17} and \citet[Fig.~6]{conrad_probability_2017}.}
\label{fig:uncertainty-(un)awareness}
}
\end{figure}
\hans{Move this plot to the bottom?}
In the following two subsections, we provide explicit formulas for $\bs{m}_{\theta}$ and $\bs{P}$.
A detailed derivation of these formulas is given in \Cref{suppl:EquivForm}.

\subsubsection{The Filtering Mean}   \label{subsubsection:TheFilteringMean}

Under \Cref{ass:f_linear_in_theta}, the filtering mean $\bs{m}_{\theta} = [m_{\theta}(t_1), \dots, m_{\theta}(t_M) ]^{\intercal}$ is given by
\begin{align}    \label{eq:mean_linear_in_param}
    \bs{m}_{\theta}
    =
    \begin{bmatrix} \mathbbm{1}_M & J \end{bmatrix} \begin{bmatrix} x_0 \\ \theta  \end{bmatrix}
    =
    x_0 \cdot \mathbbm{1}_M  + J \theta
    \
    \in \R^{M}
    ,
\end{align}
where $\mathbbm{1}_M= [1,\dots,1]^{\intercal}$ denotes a vector of $M$ ones.
Hence, $\bs{m}_{\theta}$ is linear in $\theta$ as well as in the extended parameter vector $[x_0,\theta^{\intercal}]^{\intercal}$.
(A more detailed derivation of \cref{eq:mean_linear_in_param} is provided in \Cref{suppl:derivation_of_linear_mean}.)
Here,
\begin{align}    \label{eq:def_Jacobian}
    J
    \defeq
    K Y
    \
    \in \R^{M \times n}
\end{align}
is an estimator of the Jacobian matrix of the map $\theta \mapsto \bs{m}_{\theta}$, as we show in \Cref{theorem:true_Jacobian_of_mtheta} below.
This estimator is equal to the product of the \emph{kernel prefactor} $K$ and the \emph{evaluation factor} $Y$.
The kernel prefactor $K$ is given by
\begin{align}   \label{eq:def_K}
    K
    \defeq
    \begin{bmatrix}
        \kappa_1 , \dots , \kappa_M
    \end{bmatrix}^{\intercal}
    \
    \in \R^{M \times N}
    ,
\end{align}
whose $i$-th row is
\begin{align}   \label{eq:def_kappa}
    \kappa_i
    \defeq
    \begin{bmatrix}
        \tilde{\kappa}_i^{\intercal}
        ,
        0, \dots, 0  
    \end{bmatrix}^{\intercal}
    \
    \in \R^{N},
\end{align}
which is defined by 
\begin{align}  \label{eq:def_tilde_kappa}
    \tilde{\kappa}_i
    \defeq 
    \left [ \dKid  + R \cdot I_{l_i}  \right ]^{-1} \kd(h:t_i,t_i)
    \ 
    \in \R^{l_i},
\end{align}
for some measurement variance $R \geq 0$.
Here, $\kd = \de k(t,t^{\prime}) /  \de t^{\prime}$ and $\dkd = \de^2 k(t,t^{\prime}) / \de t \de t^{\prime} $ are derivatives of the IBM kernel $k$, and, analogously, the cross-covariance w.r.t.~the kernel $\dk$ and the kernel Gram matrix w.r.t.~the kernel $\dkd$ up to time $t_i$ are denoted by
\begin{align}
    \label{eq:def_kd}
    \kd(h:t_i, t_i)
    &\defeq
    \begin{bmatrix} \kd(t_i,h), \dots , \kd(t_i,t_i) \end{bmatrix}^{\intercal}, \ \text{and}
    \\
    \dKid
    &\defeq
    \SmallbMatrix{ \dkd(h,h) & \cdots & \dkd(l_i h, l_i h) \\ \vdots & \ddots & \vdots \\ \dkd(l_i h,h) & \cdots & \dkd(l_i h, l_i h) }.
    \label{eq:def_dKid}
\end{align}
Now, recall \Cref{ass:f_linear_in_theta}.
For a given $\theta$, the entries of the evaluation factor $Y \in \R^{N \times n}$ are
\begin{align}   \label{eq:def_data_factor}
    y_{ij}
    \defeq 
    f_j(m_{\theta}^-(ih)) - f_j(x_0),
\end{align}
for all $i=1,\dots,N$ and $j=1,\dots,n$, where $m_{\theta}^-(ih)$ is the predictive mean of the ODE Filter at $t=ih$.
Note that the Gaussian ODE Filter computes the $f_j(m_{\theta}^-(ih))$ and $f_j(x_0)$ for every forward solve as intermediate quantities, to evaluate the right-hand side of \cref{eq:ODE}. 
Hence, $Y$ is freely accessible with every filtering distribution, \cref{eq:filtering_distribution}.
However, as an estimate of $x_{\theta}(ih)$, $m_{\theta}^-(ih)$ depends on $\theta$ in a nonlinear and potentially sensitive way.
By ignoring this dependence in the above notation, we, strictly speaking, also omit the dependence of $Y$ and, thereby, $J$ on $\theta$ (more in \Cref{suppl:derivation_of_linear_mean}).
For this reason, $J$ is not the true Jacobian of $\theta \mapsto \bs{m}_{\theta}$ but only an estimator (see \Cref{subsec:JacobianEstimator}).
\subsubsection{The Filtering Covariance}   \label{subsubsec:FilteringCovariance}
The entries of the covariance matrix $\bs{P} \defeq \operatorname{diag}(P(t_1), \dots, P(t_M)) \in \R^{M \times M} $ of the filtering distribution from \cref{eq:filtering_distribution} coincide with the GP-posterior variances, i.e.
\begin{align}   \label{eq:filtering_cov}
    P(t_i)
    =
    &\SmallbMatrix{ k(h,h) & \cdots & k(l_i h, l_i h) \\ \vdots & \ddots & \vdots \\ k(l_i h,h) & \cdots & k(l_i h, l_i h) } 
    -
    \kd(h:t_i, t_i)^{\intercal}
    \notag
    \\
    &\times \left [ \dKid  + R \cdot I_l  \right ]^{-1}
    \kd(h:t_i, t_i),
\end{align}
and are hence independent of $\theta$. 
(See \Cref{suppl:GP_form_of_filtering_distribution} for a detailed derivation of \cref{eq:filtering_cov}.)

\subsection{Decomposition of the True Jacobian}    \label{subsec:JacobianEstimator}

Next, we give an explicit decomposition of the true Jacobian into the estimator $J$, the kernel prefactor $K$ and a sensitivity term $S$.
\begin{theorem}    \label{theorem:true_Jacobian_of_mtheta}
    Under \Cref{ass:f_linear_in_theta}, the true Jacobian $D \bs{m}_{\theta} \in \R^{M \times n}$ of $\theta \mapsto \bs{m}_{\theta}$ has the analytic form
    \begin{align}
        \label{eq:analyticJacobian}
        D \bs{m}_{\theta}
        &\defeq
        [ \nabla_{\theta} m(t_1), \dots, \nabla_{\theta} m(t_M) ]^{\intercal}
        =
        J + KS
        ,
    \end{align}
    where the \emph{sensitivity term} $S$ is defined by
    \begin{align}   \label{eq:def_S}
        S 
        \defeq
        \begin{bmatrix}
            \Lambda_1^{\intercal} \theta, \dots, \Lambda_{N}^{\intercal} \theta
        \end{bmatrix}^{\intercal}
        \
        \in \R^{N \times n}.
    \end{align}
    Here, $\Lambda_j = \begin{bmatrix}\lambda_{kl}(jh)\end{bmatrix}_{kl}$ is the $n \times n$ matrix with entries
    \begin{align}  \label{eq:def_lambda_kl}
        \lambda_{kl}(jh)
        \defeq
        \frac{\rd}{\rd x}f_l(m_{\theta}^-(jh))
        \cdot 
        \frac{\partial}{\partial \theta_k} m_{\theta}^-(jh).
    \end{align}
\end{theorem}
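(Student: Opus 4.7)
The plan is to compute $D\bs{m}_\theta$ by directly differentiating the closed-form expression $\bs{m}_\theta = x_0 \mathbbm{1}_M + KY\theta$ from \cref{eq:mean_linear_in_param} (after substituting $J = KY$ from \cref{eq:def_Jacobian}), and to identify the two resulting summands with $J$ and $KS$. The key structural observation is that the kernel prefactor $K$ of \cref{eq:def_K} is built purely from the IBM kernel at the filter grid and is therefore $\theta$-independent, whereas the evaluation factor $Y$ of \cref{eq:def_data_factor} \emph{does} depend on $\theta$ through the predictive means $m_\theta^-(jh)$. Hence $D\bs{m}_\theta = K\cdot D_\theta[Y\theta]$, and the whole calculation reduces to evaluating $D_\theta[Y\theta]$ by the product rule.

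First, I would split, for any $j \in \{1,\dots,N\}$ and $k \in \{1,\dots,n\}$,
\begin{align*}
  \partial_{\theta_k}(Y\theta)_j
  = Y_{jk} + \sum_{l=1}^n \bigl(\partial_{\theta_k} Y_{jl}\bigr)\,\theta_l.
\end{align*}
Assembling the first term over $j$ and premultiplying by $K$ recovers $KY = J$, which supplies the first summand in the decomposition.

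Second, I would invoke \Cref{ass:f_linear_in_theta} to write $Y_{jl} = f_l(m_\theta^-(jh)) - f_l(x_0)$ (with $x_0$ a $\theta$-independent quantity), and apply the chain rule to obtain $\partial_{\theta_k} Y_{jl} = f_l'(m_\theta^-(jh))\cdot\partial_{\theta_k} m_\theta^-(jh)$. This is precisely $\lambda_{kl}(jh)$ by \cref{eq:def_lambda_kl}. Packaging these chain-rule contributions into the $N\times n$ matrix whose $j$-th row coincides with $(\Lambda_j^\intercal\theta)^\intercal$---i.e.~the sensitivity term $S$ of \cref{eq:def_S}---and premultiplying by $K$ produces the $KS$ summand, completing the decomposition.

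The main obstacle is purely bookkeeping: one has to track the two different roles that the indices of $\lambda_{kl}$ play (differentiation with respect to $\theta_k$ versus the derivative of the $l$-th field $f_l$) in order to match the row/column layout of $\Lambda_j$ used in \cref{eq:def_S}. A subtler conceptual point, which the decomposition makes transparent, is that $m_\theta^-(jh)$ depends on $\theta$ via a recursive filter update; I do not need to unroll this recursion, because it enters the statement only through the partial derivatives $\partial_{\theta_k} m_\theta^-(jh)$ that are hidden inside $\lambda_{kl}$. This is exactly why $J = KY$ alone is only an \emph{estimator} of the true Jacobian---it ignores the sensitivity of the predictive means themselves---and why the correction $KS$ is needed to recover equality.
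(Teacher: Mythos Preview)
Your proposal is correct and follows essentially the same route as the paper's proof: both start from $\bs{m}_\theta = x_0\mathbbm{1}_M + KY\theta$, use that $K$ is $\theta$-independent so that $D\bs{m}_\theta = K\,D_\theta[Y\theta]$, and then apply the product rule to split $D_\theta[Y\theta]$ into the $Y$-part (yielding $J=KY$) and the chain-rule part $\partial_{\theta_k}Y_{jl}=\lambda_{kl}(jh)$ (yielding $KS$). The paper merely organizes the same computation row-by-row via $\nabla_\theta m(t_i)$ and the intermediate Jacobian $DY_j=\Lambda_j$, which is exactly the index bookkeeping you anticipated.
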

\begin{proof}
  See \Cref{proof:theorem:true_Jacobian_of_mtheta}.
\end{proof}
Thus, $KS$ is the exact approximation error of $J$.

\section{Bound on Approximation Error of $J$}   \label{sec:bound_on_appr_error}

In this section, we provide a bound on the approximation error of $J$ under the following assumptions.
\begin{assumption}  \label{ass:fi_regularity}
  The first-order partial derivatives of $f_i$, $1\leq i \leq N$, are bounded and globally $L$-Lipschitz, for $L>0$.
\end{assumption}
\Cref{ass:fi_regularity} is required to bound the global error of the ODE forward solution by \citet[Thm.~6.7]{KerstingSullivanHennig2018}.
\begin{assumption}\label{ass:barN}
  For the computation of $J$ we only use a maximum of $\bar{N} \leq N$ time points, for some finite $\bar{N} \in \N$. 
\end{assumption}
\Cref{ass:barN} precludes the condition number of the $K$ and $S$ from growing arbitrarily large, thereby preventing numerical instability.
While this restriction is necessary for \Cref{theorem:bound_on_approximation_error}, it is not relevant in practice because we are computing with a non-zero step size $h>0$ anyway so that many different parameters $\theta$ can be simulated.
\begin{theorem} \label{theorem:bound_on_approximation_error}
  If $\Theta \subset \R^n$ is compact and $R>0$, then it holds true, under Assumptions \ref{ass:f_linear_in_theta} to \ref{ass:barN}, that 
  \begin{align}   \label{eq:bound_on_approximation_error}
    \norm{J - D \bs{m}_{\theta}}
    \leq 
    C(T) \left ( \norm{\nabla_{\theta} x_{\theta}} + h \right )
  \end{align}  
  for sufficiently small $h>0$, where $C(T) > 0$ is a constant that depends on $T$.
\end{theorem}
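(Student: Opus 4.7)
The plan is to start from the decomposition in \Cref{theorem:true_Jacobian_of_mtheta}, which gives $D\bs{m}_{\theta} - J = KS$, so that the problem reduces to bounding $\|KS\|$. I will use sub-multiplicativity $\|KS\| \leq \|K\|\,\|S\|$ and bound the two factors independently, using \Cref{ass:barN} throughout to keep matrix dimensions finite and condition numbers controlled.

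First I would bound $\|K\|$. Each row $\kappa_i$ of $K$ has the form $\tilde\kappa_i = [\dKid + R\,I_{l_i}]^{-1}\,\kd(h\colon t_i,t_i)$. Because $R>0$ and $\dKid$ is positive semidefinite, $[\dKid + R I_{l_i}]^{-1}$ has operator norm at most $R^{-1}$; combined with \Cref{ass:barN} (so $l_i \leq \bar N$) and the boundedness of $\kd$ on the compact domain $[0,T]^2$, this yields a uniform constant $C_K=C_K(T,R,\bar N)$ with $\|K\|\leq C_K$.

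Next I would bound $\|S\|$ entrywise using \cref{eq:def_S,eq:def_lambda_kl}. By \Cref{ass:fi_regularity} the factors $\tfrac{d}{dx}f_l(m_\theta^-(jh))$ are uniformly bounded by some constant $C_f$. The remaining factor is $\partial_{\theta_k} m_\theta^-(jh)$, which I would compare to the true sensitivity $\partial_{\theta_k} x_\theta(jh)$. The key ingredient here is \citet[Thm.~6.7]{KerstingSullivanHennig2018}: under \Cref{ass:fi_regularity} the predictive mean satisfies $\|m_\theta^-(t)-x_\theta(t)\| = \BO(h)$ uniformly on $[0,T]$ and uniformly in $\theta$ on the compact set $\Theta$. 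The predictive mean depends smoothly on $\theta$ (via the recursion of the Kalman filter applied to the analytic functions $f_i$ from \Cref{ass:f_linear_in_theta}), and $x_\theta$ depends smoothly on $\theta$ by the Picard--Lindel\"of sensitivity theory. A standard uniform-convergence-of-derivatives argument (e.g.\ differentiating the filter recursion and applying discrete Gr\"onwall to the resulting perturbation equation, in parallel with the continuous sensitivity ODE) then upgrades this to
\begin{align*}
  \bigl| \partial_{\theta_k} m_\theta^-(jh) - \partial_{\theta_k} x_\theta(jh) \bigr|
  \leq c_1(T)\,h
\end{align*}
for $h$ sufficiently small, where $c_1(T)$ grows at most exponentially in $T$ via the Gr\"onwall factor $e^{LT}$ with $L$ the Lipschitz constant from \Cref{ass:fi_regularity}. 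Combining with the triangle inequality,
\begin{align*}
  \bigl| \partial_{\theta_k} m_\theta^-(jh) \bigr|
  \leq \bigl\| \nabla_{\theta} x_{\theta}(jh) \bigr\| + c_1(T)\,h
  \leq \bigl\| \nabla_{\theta} x_{\theta} \bigr\|_{\infty} + c_1(T)\,h,
\end{align*}
and since $\bar N$ bounds the number of rows of $S$, I obtain $\|S\| \leq c_2(T,\bar N)\bigl(\|\nabla_{\theta} x_{\theta}\| + h\bigr)$.

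Multiplying the two bounds gives $\|J - D\bs{m}_\theta\| \leq C_K\, c_2(T,\bar N)\bigl(\|\nabla_{\theta} x_{\theta}\| + h\bigr)$, which is the claim with $C(T) \defeq C_K\,c_2(T,\bar N)$. I expect the main obstacle to be the parameter-sensitivity step, i.e.\ justifying rigorously that $\partial_{\theta_k}m_\theta^-(jh)$ inherits the $\BO(h)$ convergence rate from $m_\theta^-$ itself. This requires either (i) a careful pathwise differentiation of the Kalman filter recursion with respect to $\theta$ and a discrete Gr\"onwall argument, or (ii) invoking the fact that both $m_\theta^-$ and $x_\theta$ depend smoothly on $\theta$ on the compact set $\Theta$ and that uniform convergence of smooth functions on compacta, together with an equicontinuity estimate on the $\theta$-derivatives, yields convergence of the derivatives. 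Either route works, but keeping the Gr\"onwall constant tracked explicitly is what pins down the $T$-dependence in $C(T)$.
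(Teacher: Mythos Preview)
Your proposal is correct and matches the paper's proof in structure: invoke \Cref{theorem:true_Jacobian_of_mtheta} to get $D\bs{m}_\theta - J = KS$, bound $\|K\|$ and $\|S\|$ separately, and combine via sub-multiplicativity. Your bound on $\|K\|$ (operator norm of $[\dKid + RI]^{-1}$ at most $R^{-1}$, $\kd$ bounded on $[0,T]^2$, row count capped by $\bar N$) is a slightly cleaner variant of the paper's \Cref{lemma:boundK}, which routes through the nuclear norm instead. Your bound on $\|S\|$ via the bounded $f_l'$ times the filter sensitivity is the paper's \Cref{lemma:boundS}; note that the paper also explicitly uses compactness of $\Theta$ to control the $\|\theta\|$ factor hidden in $S = [\Lambda_j^{\intercal}\theta]_j$, which you leave implicit.

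The one genuine divergence is exactly the step you flag as the main obstacle: passing from $\|m_\theta^- - x_\theta\| = \BO(h)$ to $|\partial_{\theta_k} m_\theta^- - \partial_{\theta_k} x_\theta| = \BO(h)$. The paper handles this (\Cref{lemma:sup_bound_on_sensitivity_difference}) via an abstract transfer principle (\Cref{lemma:regularity_carries_over_to_param_derivative}) asserting that if a $C^1$-in-$\lambda$ family $g(x,\lambda)$ is uniformly $\BO(h(x))$ over a compact parameter set then so is $\partial_\lambda g$. Your route (i)---differentiate the filter recursion in $\theta$ and compare the resulting discrete sensitivity recursion to the continuous sensitivity ODE via discrete Gr\"onwall---is more constructive and makes the $e^{LT}$-type dependence explicit; your route (ii) is essentially the paper's lemma. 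Both the paper and you leave this step at sketch level, and you are right that it is the only delicate one.
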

\begin{proof}
  See \Cref{suppl:proof_bound_on_approximation error}.
\end{proof}
Intuitively, this upper bound can be thought of as a decomposition of the approximation error of the `sensitivity-unaware' estimator $J$ into a summand proportional to the ignored sensitivity $\norm{\nabla_{\theta} x_{\theta}}$ and the global integration error of the ODE filter, which is bounded by $C(T)h$ \citep[Thm.~6.7]{KerstingSullivanHennig2018}.

\section{Gradient and Hessian Estimators}  \label{sec:gradient_and_hessian_estimators}

We observe that the uncertainty-aware likelihood, \cref{subeq:uncertainty_aware_likelihood}, can be written in the form
\begin{align}  \label{eq:maximum_likelihood_objective}
    p(\bs{z}\given \theta)
    =
    \frac{e^{-E(\bs{z})}}{Z},
\end{align}
with evidence $Z > 0$ and negative log-likelihood
\begin{align}    \label{eq:def_log-likelihood}
    E(\bs{z})
    &\defeq 
    \frac{1}{2} 
    \left [\bs{z} - \bs{m}_{\theta} \right]^{\intercal}
    \left [ \bs{P} + \sigma^2 I_M \right ]^{-1}
    \left [ \bs{z} - \bs{m}_{\theta} \right ]
    \\
    &\stackrel{\cref{eq:mean_linear_in_param}}{=}
    \frac{1}{2} 
    \left [\bs{z} - x_0 \cdot \mathbbm{1}_M  - J \theta \right]^{\intercal}
    \left [ \bs{P} + \sigma^2 I_M \right ]^{-1}
    \notag
    \\
    &\phantom{\stackrel{\cref{eq:mean_linear_in_param}}{=}} \times
    \left [ \bs{z} - x_0 \cdot \mathbbm{1}_M  - J \theta \right ]    
    .
\end{align}
For a given value of the Jacobian estimator $J$, the thereby-implied gradient and Hessian estimators are, by application of the chain rule,
\begin{align}\label{eq:gradient_estimator_given_by_J}
  \hat\nabla_\theta  E(\bs{z})
  &\defeq
  -J^{\intercal} \left [ \bs{P} + \sigma^2 I_M \right ]^{-1} \left [ \bs{z} - \bs{m}_{\theta} \right ],
  \quad
  \text{and}
  \\
  \hat\nabla^2_\theta E(\bs{z})
  &\defeq
  J^{\intercal} \left [ \bs{P} + \sigma^2 I_M \right ]^{-1} J.
  \label{eq:Hessian_estimator_given_by_J}
\end{align}
(See \Cref{fig:gradient_and_Hessian_visualization} for a visualization of these estimators.)
\Cref{suppl:sec:Gradient_And_Hessian_Estimators_for_The_Bayesian_Case} provides versions of these estimators for Bayesian inference of $\theta$.
\begin{figure}[ht]
\centering
\input{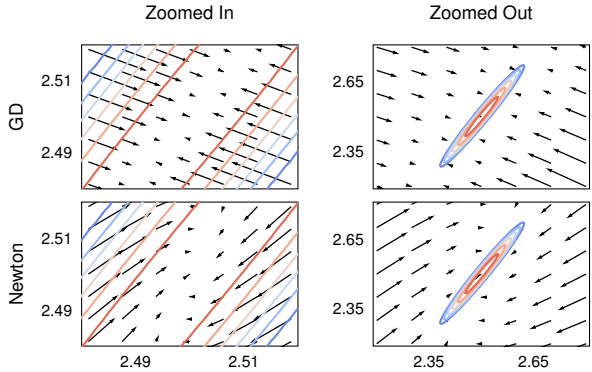} 
\caption{
Directions of gradient descent (GD) and Newton using \cref{eq:gradient_estimator_given_by_J,eq:Hessian_estimator_given_by_J}; around mode (left) and globally (right) of the likelihood, based on the logistic ODE. 
Globally, GD points more directly to the high-probability region.
Within this region, however, Newton is better directed to the mode.  
\label{fig:gradient_and_Hessian_visualization}}
\end{figure}

\section{New Gradient-Based Methods}   \label{sec:novel_gradientbased_algorithms}
By deriving gradient and Hessian estimators of the negative log-likelihood, we have removed the need for `likelihood-free' inference.
This enables the use of two classes of inference methods for $\theta$ which could not otherwise be applied: gradient-based \emph{optimization} and gradient-based \emph{sampling}.

\subsection{Gradient-Based Optimization}

In principle, all first and second-order optimization algorithms (e.g.~\citet{Bottou_Optimization_18}), are now applicable by \cref{eq:gradient_estimator_given_by_J,eq:Hessian_estimator_given_by_J}---such as (stochastic) gradient descent (GD), (stochastic) Newton (NWT), Gauss-Newton and natural Gradient descent.
This application of the estimators \eqref{eq:gradient_estimator_given_by_J} and \eqref{eq:Hessian_estimator_given_by_J} unlocks fast computation of single parameter estimates by maximum-likelihood estimation, as we demonstrate in the experiments (see \Cref{sec:experiments}).

\subsection{Gradient-Based Sampling}

Likewise, all gradient-based MCMC schemes are now available.
Classical gradient-based samplers include Langevin Monte Carlo (LMC) \citep{RobertsTweedie1996} and Hamiltonian Monte Carlo (HMC) \citep{betancourt2017conceptual}.
They are known to be more efficient than gradient-free samplers in finding and covering regions of high probability \citep[Section 30.1]{mackay2003information}.
While their standard form only makes use of gradients, more sophisticated versions include second-order information as well:
When the likelihood is ill-conditioned (i.e.~it varies much more quickly in some directions than others), it is advantageous to precondition the proposal distribution with a suitable matrix \citep{GirolamiCalderhead2011LangevinEqualHamiltonianMonteCarlo}.
A popular choice for the preconditioner is the Hessian \citep{QiMinka_HessianBased_2002}.
Hence, we can precondition LMC and HMC that use \cref{eq:gradient_estimator_given_by_J} as a gradient with the Hessian estimator from \cref{eq:Hessian_estimator_given_by_J}.   
For LMC, this leads to the proposal distribution
\begin{align}
\pi(\theta^{i+1} \given \theta^i) &= \theta^i - \rho [\hat\nabla^2_\theta E_{\theta^i}(\bs{z}))]^{-1}\hat\nabla_\theta E_{\theta^i}(\bs{z})) + \xi^i, \\
\xi^i &\sim \mathcal{N}(0, 2 \rho [\hat\nabla^2_\theta E_{\theta^i}(\bs{z}))]^{-1}),
\end{align}
where $\rho$ is the proposal width.
(Analogous formulas hold for HMC.)
Below, we refer to the so-preconditioned versions of LMC and HMC as PLMC and PHMC. 
In \Cref{sec:experiments}, we show that the gradient-based versions more aptly explore regions of high likelihood than their gradient-free counterparts.

\subsection{Algorithm}

The generic method that we propose is outlined in \Cref{alg:pseudocode}.
\begin{algorithm}
	\caption{Gradient-based sampling/optimization}
	\label{alg:pseudocode}
	\begin{algorithmic}[1]
 		\STATE Precompute $K$ and $(P + \sigma^2 I_M)^{-1}$  (see eqs.~\eqref{eq:def_K}, \eqref{subeq_b:likelihood})		
 		\STATE Initialize $\theta=\theta^0$
 		\REPEAT 
 			\STATE Solve ODE with $\theta$ (this generates $Y$; see eq.~\eqref{eq:def_data_factor}) \\
 			\STATE Compute $J = KY$ (see eq.~\eqref{eq:def_Jacobian})\\
 			\STATE Compute $[\hat\nabla_\theta E$, $\hat\nabla_\theta^2 E]$ (see eqs.~\eqref{eq:gradient_estimator_given_by_J}, \eqref{eq:Hessian_estimator_given_by_J})\\
 			\STATE Update $\theta$ with gradient-based sampler/optimizer\\
		\UNTIL{convergence/mixing}
	\end{algorithmic}
\end{algorithm} 
It includes all above-mentioned classical optimization and sampling methods (by a corresponding choice in Line 7).
The only difference, compared to all of these existing gradient-based methods, are the additional Lines 5 and 6 where we compute our gradient and Hessian estimators from \cref{eq:gradient_estimator_given_by_J,eq:Hessian_estimator_given_by_J}.

\subsection{Computational Cost}
The additional computational cost---on top of the employed classical optimization/sampling methods---is equal to the cost of computing the inserted gradient (and Hessian) estimators:
precomputation of $K$ (Line 1 in \Cref{alg:pseudocode}) 
requires the inversion of the $M$ kernel Gram matrices $\{\dKid,\ i=1,\dots,M\}$, which can have a maximum dimension of $(N-1) \times (N-1)$.
This inversion can, however, be executed in linear time since $\dkd$ is a Markov kernel \citep{hartikainen2010kalman}.
Hence, $K$ is in $\BO(MN)$ and, as $M \leq N$, in $\BO(N^2)$. The cost of inverting the $M \times M$ matrix $[ \bs{P} + \sigma^2 I_M  ]$ is in $\BO(N^3)$, as $M \leq N$.
Since $K$ and $\bs{P}$ are independent of $\theta$, this $\BO(N^3)$ cost is only required once.
The Jacobian estimator $J=KY$ 
(Line 5 in \Cref{alg:pseudocode}) 
is, by \cref{eq:def_Jacobian}, a matrix product of the precomputed kernel prefactor $K$ and the evaluation factor $Y$.
$Y$ is almost free, as it is by \cref{eq:def_data_factor} only composed of terms that the Gaussian ODE filter computes anyway; see \cref{eq:data_model} in \Cref{suppl:subsec:GaussianODEFiltering}.
Given $J$ and $[ \bs{P} + \sigma^2 I_M  ]^{-1}$, computing the gradient and Hessian estimators (Line 6 in \Cref{alg:pseudocode}) is of the same complexity as computing $J$.
Thus, the additional computational cost is in $\BO(N^3)$ w.r.t.~the number of time steps $N= T/h$ executed once and otherwise linear (but almost negligible) w.r.t.~the number of simulated parameters $\theta$.
As a large number of $\theta$ is usually required, the overall overhead is small.
%
%
%

\hansIR{Maybe write a paragraph on the kernel prefactor and that we could maybe use the following formula for efficient computation of the kernel prefactor $K$:
\begin{align*}
  {\dKTd }^{-1}
  =
  \frac{1}{h}
  \SmallbMatrix{ 
  2 & -1 & &  
  \\ 
  -1 & \smallddots & \smallddots & 
  \\ 
  & \smallddots & 2 & -1 
  \\
  &  & -1 & 1 
  }  
\end{align*}.
And then use matrix inversion lemma (Woodbury) to compute  $(\dKTd + R\cdot I_{N-1})^{-1}$.
}
\hansIR{Further speed-ups could be generated by \citet{GrigorievskiySarkka_SpInGP2019}.}

\subsection{Choice of Hyperparameters}   \label{subsec:choice_of_hyperparameters}

Recall that the parameters $\sigma$ and $R$ stem from the data and the accuracy of the ODE model \citep[Section 2.3]{KerstingSullivanHennig2018}, and that we only consider once-integrated Brownian motion priors in this paper.
Therefore, the only remaining hyperparameter is the diffusion scale $\sigma_{\text{dif}}$ which controls the width of the variance $\bs{P}$; see Supplements \ref{subsec:kernel_for_derivative_observations} and \ref{suppl:GP_form_of_filtering_distribution}.
There are two ways to set it: either as a local \citep[eq.~(46)]{schober2019} or as a global \citep[eq.~(41)]{TronarpKSH2019} maximum-likelihood estimate, which can both be computed from intermediate quantities of the forward solves. 

\section{Experiments}
\label{sec:experiments}
\davidD{There is a lot of mention of the tractability of computing the Hessian, but I'd like to see an asymptotic time cost.}

\chrisIR{Experimente: Wenn wir (zb in der Intro) die Key Contributions und deren Nutzen klar abgegrenzt haben, koennte man diese Punkte explizit in den Experimenten aufgreifen.}
\phIR{Ich stimme zu!}
To test the hypothesis that the gradient and Hessian estimators $[\hat\nabla_\theta  E(\bs{z}), \hat\nabla^2_\theta E(\bs{z})]$ of the log-likelihood are useful despite their approximate nature, we compare the new optimization and sampling methods from \Cref{sec:novel_gradientbased_algorithms}---which use these estimators as if exact---with the standard `likelihood-free' approach, i.e.~with random search (RS) optimization and random-walk Metropolis (RWM) sampling.

\subsection{Setup and Methods} \label{subsec:setup_and_methods}

As benchmark systems, we choose the popular Lotka--Volterra (LV) predator-prey model and the more challenging biochemical dynamics of glucose uptake in yeast (GUiY).
For more generality, we add the chemical protein signalling transduction (PST) dynamics which violate \Cref{ass:f_linear_in_theta} and have to be linearized.
We consider our hypothesis validated if the new gradient-based algorithms outperform the conventional `likelihood-free' methods (RS, RWM) on these three systems.
All datasets are, as in \cref{eq:p_z_given_x}, generated by adding Gaussian noise to the solution $x_{\theta^{\ast}}$ for some true parameter $\theta^{\ast}$.
\\
Out of the new family of gradient-based optimizers and samplers introduced in \Cref{sec:novel_gradientbased_algorithms}, we evaluate only the most basic ones: gradient descent (GD) and Newton's method (NWT) for optimization, as well as PLMC and PHMC for sampling.
This isolates the impact of the gradient and Hessian estimators more clearly.
The required gradient and Hessian estimators are computed as detailed above.
We employ the original fixed step-size RS by \citet{Rastrigin_RandomSearch_1963}, and the RWM version from \citet[Chapter 29]{mackay2003information}.
For all optimizers, we picked the best the step size and, for all samplers, the best proposal width within the interval $[10^{-16}, 10^{0}]$ which is wide enough to contain all plausible values.
To make these experiments an ablation study for the gradient and Hessian estimators, we use Gaussian ODE filtering as a forward solver in all methods---which is similar to classical solvers anyway \citep[Section 3]{schober2019}.
Since in all below experiments $\bs{P} \gg \sigma^2 I_M$, the gradient and Hessian estimates are scale-invariant w.r.t.~hyperparameter $\sigma_{\text{dif}}^2$, as can be seen from \cref{eq:gradient_estimator_given_by_J,eq:Hessian_estimator_given_by_J}:
In this regime, $\bs{P}$ simply scales the step-size of the gradient, and $\bs{P}$ cancels out of the Hessian, making it invariant to this scale.
The same applies in the regime $\bs{P} \ll \sigma^2 I_M$; adaptation of their relative scale, by choosing $\sigma_{\text{dif}}^2$ as in \Cref{subsec:choice_of_hyperparameters}, only matters when both error-sources are of comparable scale.

\subsection{Results}
\nico{Both metrics give equally good results. To make this clear, we will mention in Section 7.2 how close the final parameter estimates are to the true values and add plots about the error in the parameter estimates to the supplements.}
We evaluate the performance of these methods over the first few iterations (steps), comparing the values of the negative log-likelihood $E$ as well as the relative error in the parameter space, $\norm{\theta^i - \theta^{\ast}} / \norm{\theta^{\ast}}$.
For optimizers, low values in both metrics indicate success and, in fact, both are important: ODE inverse problems are inherently ill-posed and can have parameters with high likelihood and large inference error that fit the data as well as the true parameter. Finding these parameters would not be a failure of the algorithms, but a success, as they are a mode of the true posterior.

Samplers, on the other hand, try to identify and explore regions of high probability (the typical set); see e.g.~\citet[Section 2]{betancourt2017conceptual}.
We opt for plotting the relative error in the parameter space additionally to the negative log-likelihood values to emphasize that, once a sampler creates samples near the typical set, MCMC methods keep exploring suitable values instead of relying on a single estimate with high likelihood. 
Despite maintaining a low near-constant negative log-likelihood, the error in the parameter space of a sampler may have (some) variation.

The details and results for each benchmark systems are presented next, in ascending order of complexity.

\subsubsection{Lotka--Volterra}     \label{subsubsec:exp_lv}

First, we study the Lotka--Volterra (LV) ODE \citep{Lotka1978}
\begin{align}   \label{eq:lotka_volterra}
    \dot x_1 = \theta_1x_1 - \theta_2 x_1x_2, \quad \dot x_2  = -\theta_3 x_2 + \theta_4 x_1 x_2,
\end{align}
\martinIR{Could you reuse the same notation as in Eq. 1 here?}
the standard model for predator-prey dynamics.
\hansIR{It is maybe the most standard ODE benchmark problem.}
We used this ODE with initial value $x_0 = [20,20]$, time interval $[0,5]$ and true parameter $\theta^{\ast} = [1, 0.1, 0.1, 1]$. 
To generate data by \cref{eq:p_z_given_x}, we added Gaussian noise with variance $\sigma^2 = 0.01$ to the corresponding solution at time points $[0.5, 1, 1.5, 2, 2.5, 3., 3.5, 4., 4.5]$. 
The optimizers and samplers were initialized at $\theta^0 = [0.8, 0.2, 0.05, 1.1]$, and the forward solutions for all likelihood evaluations were computed with step size $h=0.05$.
In order to turn this $\theta^0$ into a useful initialization for the Markov chains, we accepted the first 45 states generated by PHMC and PLMC---the same would be counterproductive for RWM since a proposed sample may be further away from the region of nonzero probability. 
The results for optimization and sampling are depicted in \Cref{fig:LV}.
\begin{figure*}[t]
\centering
	\includegraphics[width=0.49\textwidth]{./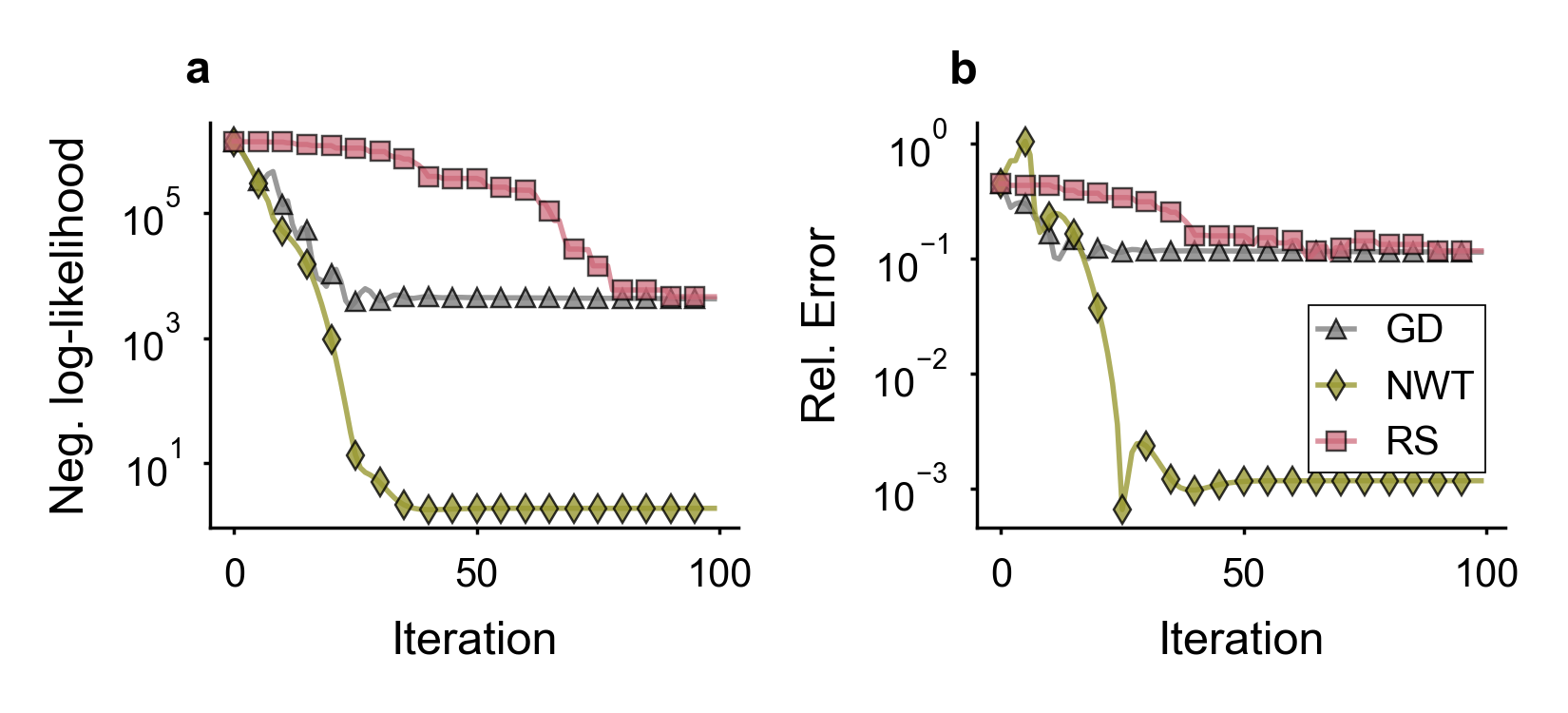}
	\includegraphics[width=0.49\textwidth]{./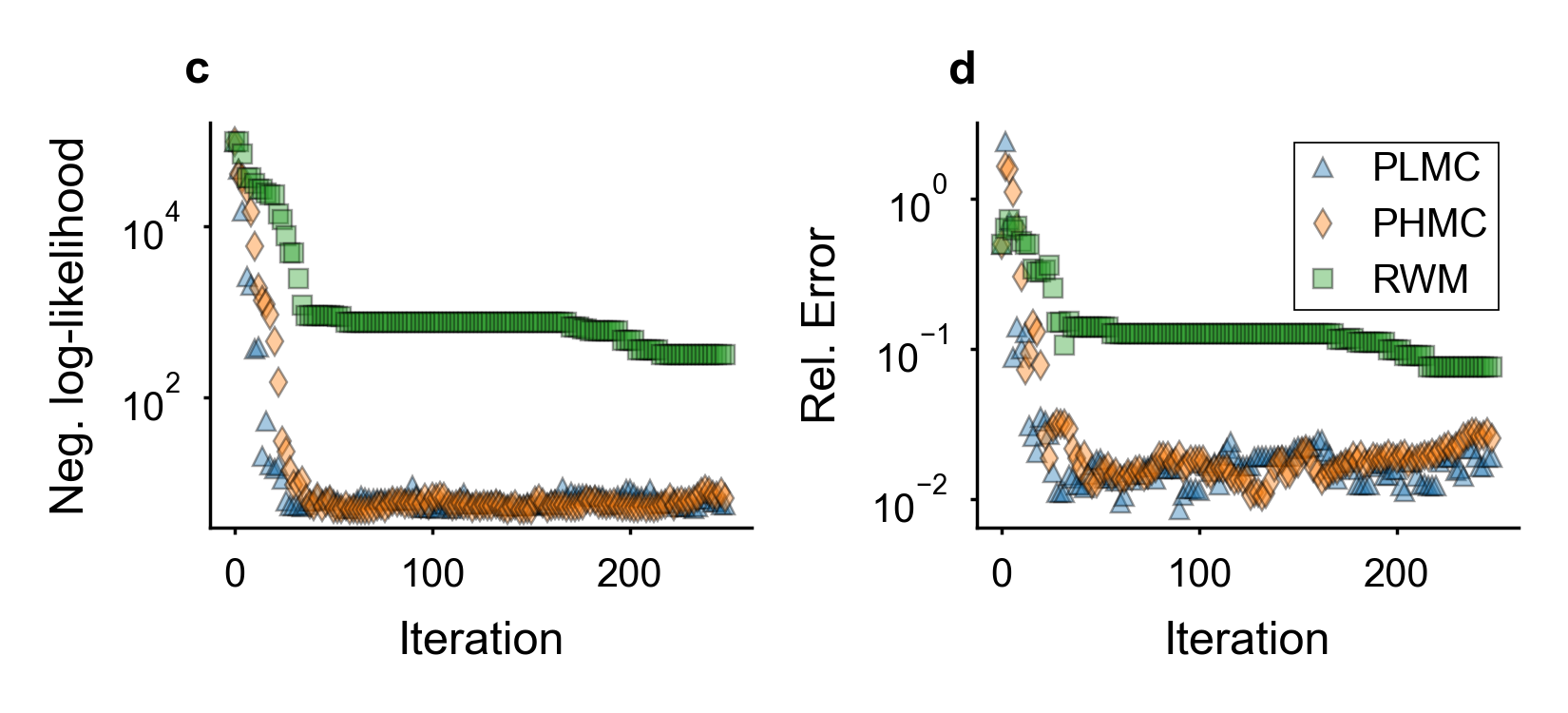}
\caption{
	 \textbf{Results for optimization (a, b) and sampling (c, d) on Lotka-Volterra}. 
Comparison of negative log-likelihood $E(\bs{z})=E_{\theta^i}(\bs{z})$ (a and c, resp.) and relative error $\norm{\theta^i - \theta^{\ast}} / \norm{\theta^{\ast}}$ (b and d, resp.). 100 iterations of optimization (only every fifth iteration has a marker) and 250 Metropolis-Hastings samples (only every other sample has a marker). 
\label{fig:LV}}
\end{figure*}
In the case of optimizers, NWT outperforms GD which, in turn, outperforms RS.
After roughly 25 samples, NWT generates iterations with relative error of less than $10^{-3}$.
While PLMC  and PHMC quickly reach and explore regions of high probability, RWM does not find likelihood values within the first 250 samples.
Thus, the gradient and Hessian estimators indeed appear to work well on LV.

\subsubsection{Protein Signalling Transduction}     \label{subsubsec:exp_pst}

Next, we consider the protein signalling transduction (PST) pathway.
It is governed by a combination of mass-action and Michaelis--Menten kinetics:
\begin{align*}
  \dot{S}
  &=
  -\theta_1 \times S - \theta_2 \times S\times R + \theta_3 \times RS,
  \\
  \dot{dS}
  &=
  \theta_1 \times S,
  \\
  \dot{R}
  &=
  - \theta_2 \times S \times R + \theta_3 \times RS + V \times \frac{Rpp}{K_m + Rpp},
  \\
  \dot{RS}
  &=
  \theta_2 \times S \times R - \theta_3 \times RS - \theta_4 \times RS,
  \\
  \dot{Rpp}
  &=
  \theta_4 \times RS - \theta_5 \times \frac{Rpp}{K_m + Rpp}.
\end{align*}
For more details, see \citet{VyshemirskyGirolami2008}.
Due to the ratio $\frac{Rpp}{K_m + Rpp}$, \Cref{ass:f_linear_in_theta} is violated.
As a remedy, we follow \citet{GorbachBauerBuhmann17} in defining the latent variables
$
  [x_1, x_2, x_3, x_4, x_5]
  \defeq
  [ S, dS, R, RS, \frac{Rpp}{K_m + Rpp}].
$
This gives rise to the new linearized ODE
\begin{align}   \label{eq:protein_signalling_transduction}
    \dot{x}_1 
    &= 
    -\theta_1 x_1 - \theta_2 x_1x_3 + \theta_3 x_4,
    \\
    \dot{x}_2 
    &= 
    \theta_1 x_1,
    \\
    \dot{x}_3 
    &= 
    -\theta_2 x_1 x_3 + \theta_3 x_4 + \theta_5  x_5,
    \\
    \dot{x}_4 
    &= 
    \theta_2 x_1 x_3 - \theta_3 x_4 - \theta_4 x_4,
    \\
    \dot{x}_5 
    &= 
    \theta_4 x_4 - \theta_5 x_5,
    \label{eq:last_equation}
\end{align}
which is an approximation of the original ODE, since \cref{eq:last_equation} ignores the factor $(K_m + R_{pp})^{-1}$.
We used this ODE with initial value $x_0 = [1,0,1,0,0]$ on time interval $[0,100]$.
We set the true parameter to $\theta^{\ast} = [0.07,  0.6,   0.05,  0.3,   0.017]$.
To generate the data by \cref{eq:p_z_given_x}, we added Gaussian noise with variance $\sigma^2 = 10^{-8}$ to the corresponding solution at time points $[1., 2., 4., 5., 7., 10., 15., 20., 30., 40., 50., 60., 80., 100.]$. 
The optimizers and samplers were initialized at $\theta^0 = [0.24 , 1.8  , 0.15 , 0.9  , 0.05]$, and the forward solutions for all likelihood evaluations were computed with step size $h=0.05$.
We use the same burn-in procedure as on the Lotka-Volterra example, accepting the first 100 samples.
\\
The results for optimization and sampling are depicted in \Cref{fig:PST}.
\begin{figure*}[t]
\centering
\includegraphics[width=0.49\textwidth]{./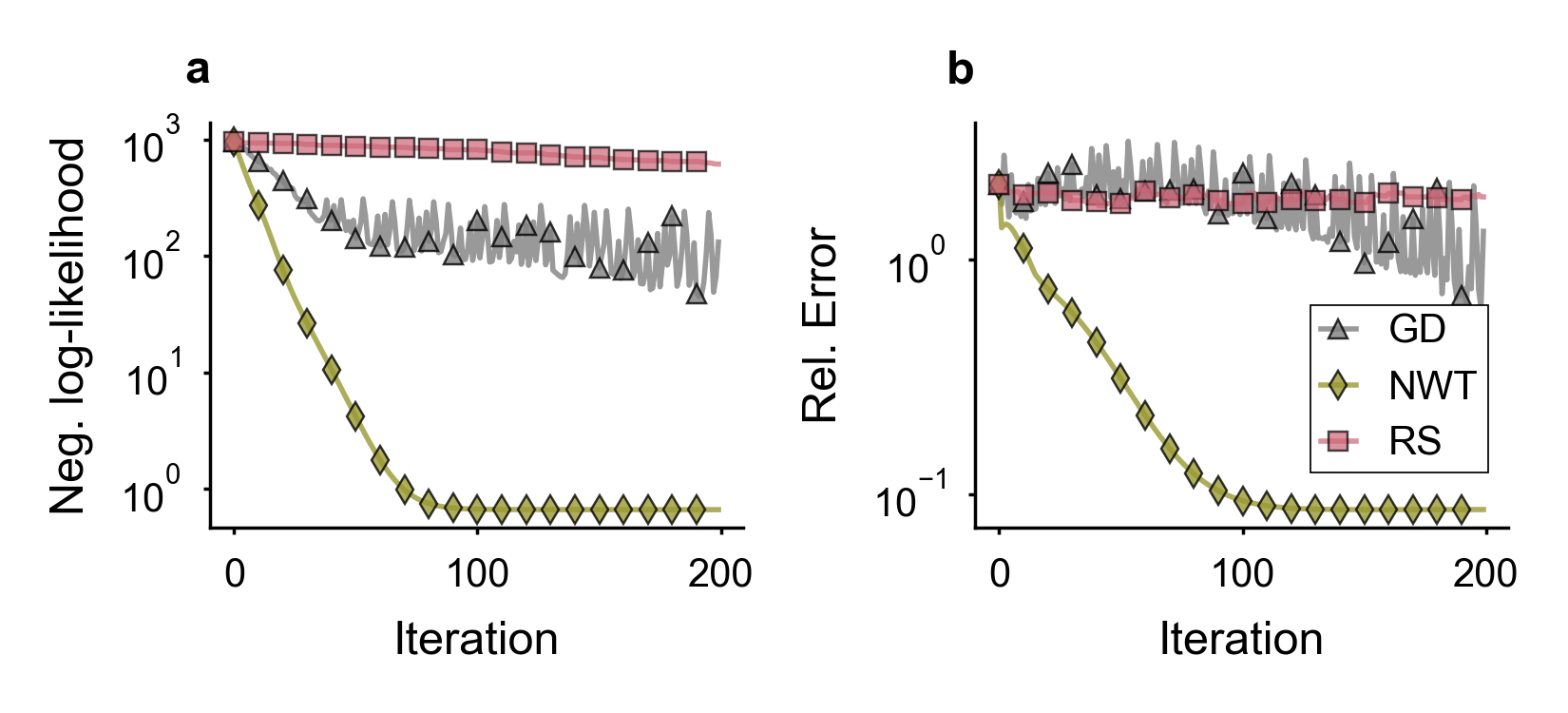}
\includegraphics[width=0.49\textwidth]{./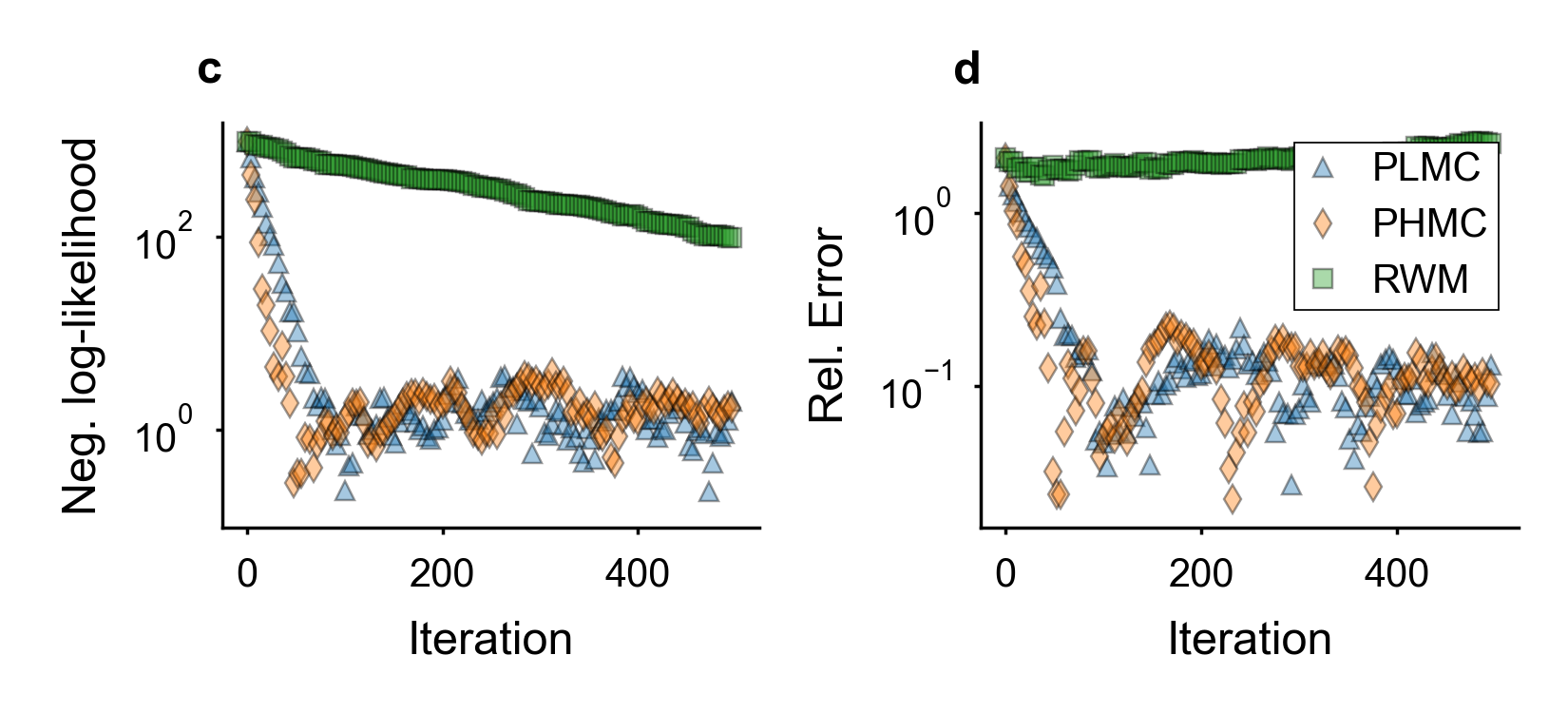}
\caption{
	\textbf{Results for optimization (a, b) and sampling (c, d) on PST}. 
	Comparison of negative log-likelihood $E(\bs{z})=E_{\theta^i}(\bs{z})$ (a and c, resp.) and relative error $\norm{\theta^i - \theta^{\ast}} / \norm{\theta^{\ast}}$ (b and d, resp.). 200 iterations of optimization (only every tenth iteration has a marker) and 500 Metropolis-Hastings samples (only every fourth sample has a marker). 
\label{fig:PST}}
\end{figure*}

Again, the new methods outperform the conventional ones in both optimization and sampling. 
For optimization, NWT converges particularly fast. The final estimate that is returned by NWT is, rounded to two digits, $\theta^{200} = (0.07, 0.60, 0.05,  0.30, 0.02)$, and hence recovers four out of five parameters exactly.
For sampling, both gradient-based samplers (after a fairly steep initial improvement) steadily stay in regions of high likelihood, while RWM only increases the likelihood in a much slower pace.
Hence, the gradient and Hessian estimators are beneficial on PST as well---although we had to linearize the ODE first.
\nico{Here, I think it would be interesting to say which "true" parameters the optimizers suggest. If my memory serves me right, all but $\theta_5$ are recovered correctly}

\subsubsection{Glucose Uptake in Yeast}       \label{subsubsec:exp_guiy}

Last, we examine the challenging biochemical dynamics of glucose uptake in yeast (GUiY), as seen in \citet{SchillingsSchwab_GUiY_2015}. 
This ODE is 9-dimensional, has 10 parameters, and satisfies \Cref{ass:f_linear_in_theta}; see \Cref{suppl:guiy} for a complete mathematical definition and parameter choices.
The results for optimization and sampling are depicted in \Cref{fig:guiy}.
\begin{figure*}[t]
\centering
\includegraphics[width=0.49\textwidth]{./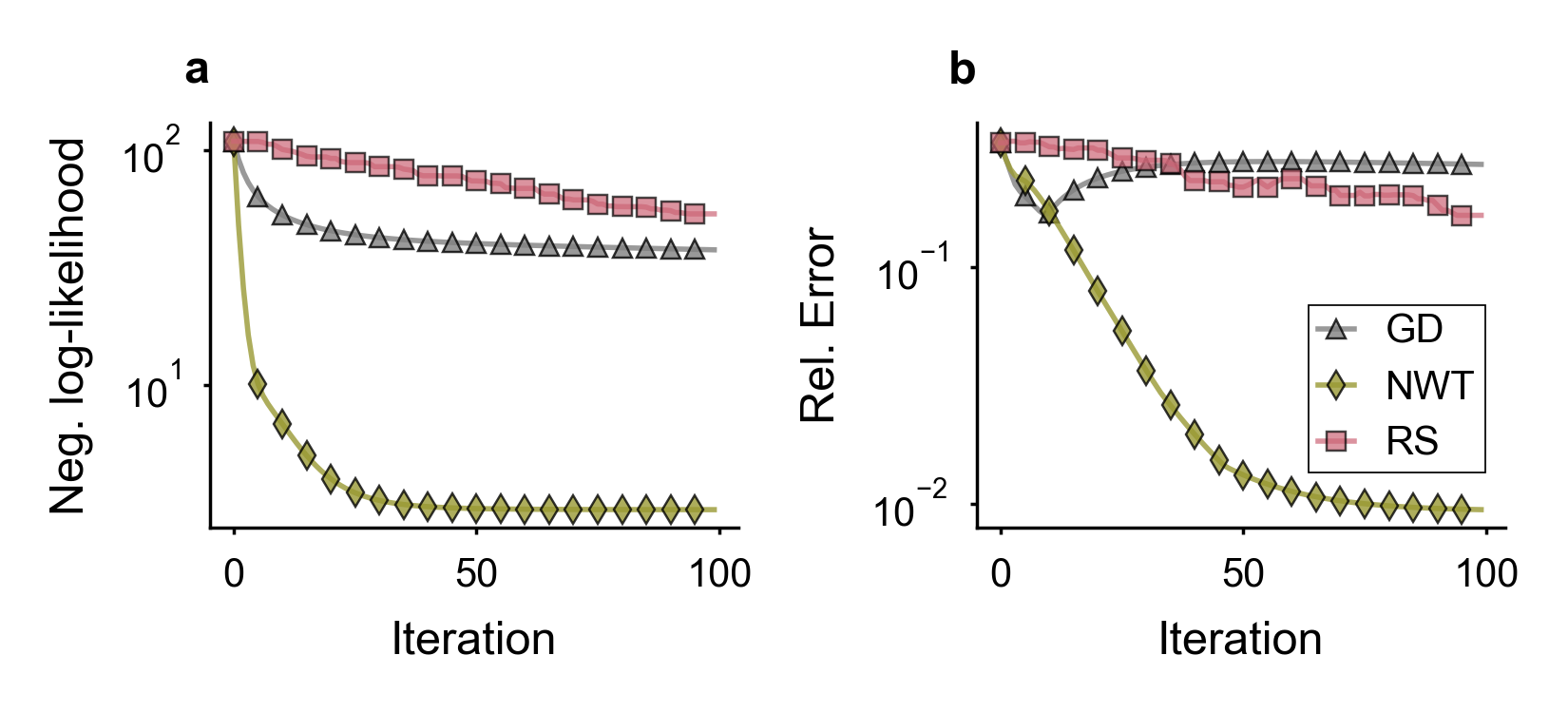}
\includegraphics[width=0.49\textwidth]{./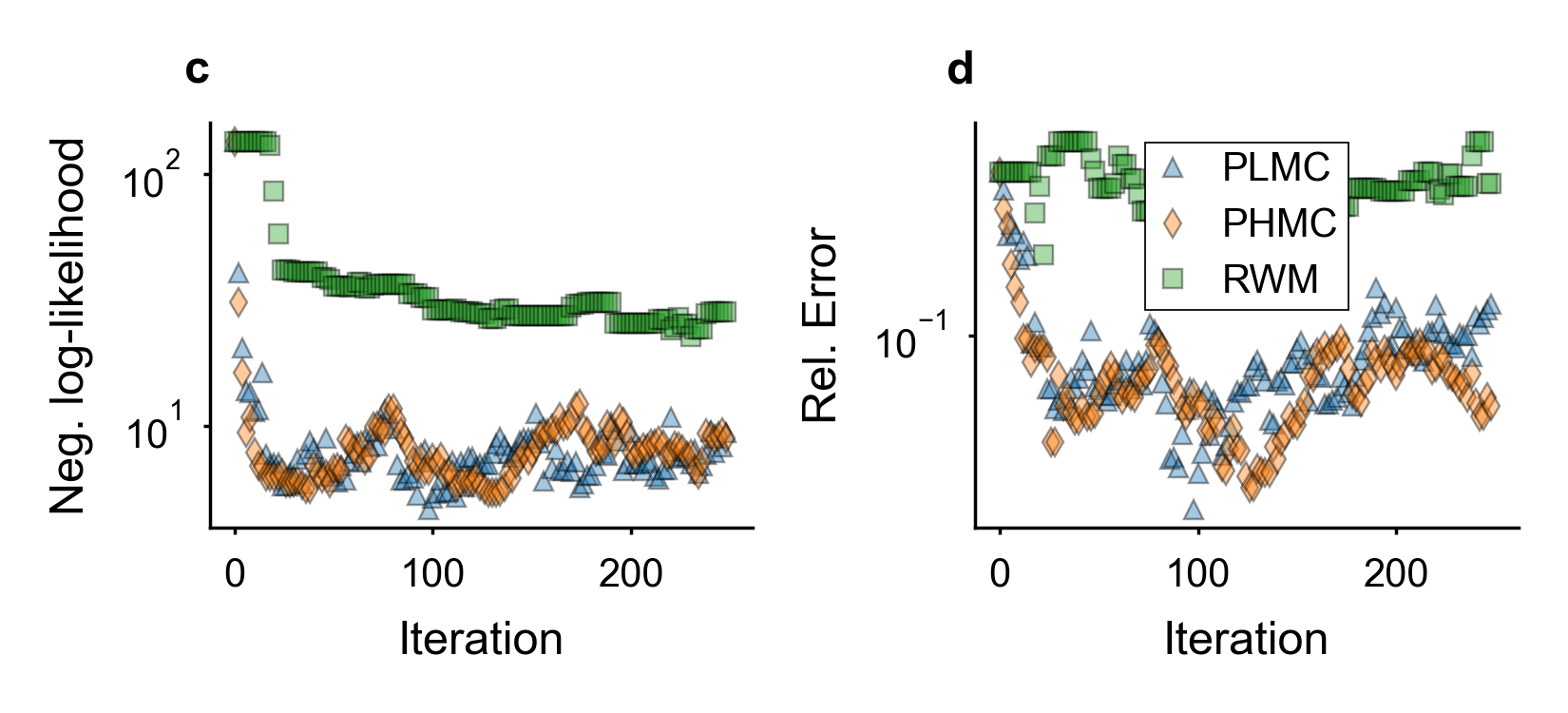}
\caption{
	\textbf{Results for optimization (a, b) and sampling (c, d) on GUiY}. 
	Comparison of negative log-likelihood $E(\bs{z})=E_{\theta^i}(\bs{z})$ (a and c, resp.) and relative error $\norm{\theta^i - \theta^{\ast}} / \norm{\theta^{\ast}}$ (b and d, resp.). 100 iterations of optimization (only every fifth iteration has a marker) and 250 Metropolis-Hastings samples (only every other sample has a marker). 
\label{fig:guiy}}
\end{figure*}

GD outperforms RS, and NWT converges even much faster than GD.
Remarkably, NWT already finds parameters that are exact up to two relative digits after only five iterations which would take RS extremely long on this 10 dimensional domain.
The gradient-based samplers (PLMC, PHMC), again, stay steadily within the region of significant likelihood, while RWM has difficulties sampling from this high dimensional problem in an efficient manner. 
Thus, this benchmark system also reaffirms the utility of the gradient and Hessian estimators.

\subsection{Summary of Experiments}

On all three benchmark ODEs, the Jacobian and Hessian estimator proved useful to speed up both sampling and optimization.
In the case of optimization, the new gradient-based methods consistently outperformed the classical random search.
Notably, the second-order optimization was always significantly more sample-efficient than plain gradient descent---which indicates that not only the gradient but also the Hessian estimator is accurate enough to be useful.
In the case of sampling, the gradient-based sampling methods, which were preconditioned by the Hessian, consistently outperformed the classical approach as well:
PLMC and PHMC steadily explored regions of elevated likelihood, while the conventional random-walk Metropolis methods hardly ever reached regions of nonzero probability  and wasted computational budget on less likely parameters.
\hansIR{Maybe add plot which shows why preconditioning is necessary and add a statement on the utility of the Hessian.}
Overall, we consider these experiments first evidence for the hypothesis that the proposed gradient-based methods require drastically fewer samples than the standard `likelihood-free' approach.
\nicoIR{It is evident that despite reaching positive likelihood values more quickly than RWM, roughly 10000 steps are needed whereas preconditioning reduces it down to roughly 12 steps. The problem settings (initial values, noise, etc.) are identical for all Lotka-Volterra experiments.}

\section{Related and Future Work}   \label{section:related_and_future_work}

The following research areas are particularly closely related to this paper.

{\bf Probabilistic numerical methods (PNMs)} \ There are two lines of work on PNMs for ODE forward problems: sampling- and filtering-based solvers; an up-to-date comparative discussion of these two approaches is given in \citet[Section 1.2.]{KerstingSullivanHennig2018}.
While this paper is the first to use filtering-based PNMs for inverse problems, there are previous methods---starting with \citet{o.13:_bayes_uncer_quant_differ_equat}---that use sampling-based solvers to integrate a non-Gaussian uncertainty-aware likelihood (cf.~the Gaussian \cref{subeq:uncertainty_aware_likelihood}) into a pseudo-marginal MCMC framework; see \citet{conrad_probability_2017}, \citet{Teymur2018a}, \citet{Lie17}, and \citet{AbdulleGaregnani17}. 
Notably, \citet{MatsudaMiyatake_ODEsIPsWithNumUQ_2019} recently proposed to model the numerical errors as random variables without explicitly employing PNMs.
On a related note, there are also first PNMs for PDE inverse problems; see \citet{Cockayne_PDEInverseProblems17} and \citet{OatesCockayne_PN_PDEs_2019}.

{\bf GP-surrogate methods} \ Modelling expensive likelihoods by GP regression is a common approach in statistics; see e.g.~\citet{Sacks_DesignAndAnalysisOfComputerExperiments_1989} and \citet{OHagan_GPSurrogateTutorial}.
Notably, \citet{meeds_welling_GPSABC} incorporated this approach into an ABC framework, and \citet{PerdikarisKarniadakis2016}, on the other hand, into a non-Bayesian setting by efficient global optimization.
While these methods also compute a GP approximation to the likelihood, they are fundamentally different as they globally model the likelihood with a GP (instead of constructing a local Gaussian approximation (see \cref{subeq:uncertainty_aware_likelihood}), and do not exploit the shape of the ODE at all.

{\bf Gradient Matching} \ This approach fits a joint GP model of the solution and its derivatives by conditioning on the ODE. 
Since introduced by \citet{calderhead2008accelerating}, it has received much attention in machine learning; see \citet{MacdonaldHusmeier_GradMatchingReview_2015} for a detailed review, \citet[Section 1]{WenkAbbiatiBauerOsborneKrause2019ODIN} for an up-to-date overview, and \citet{GorbachBauerBuhmann17} for a paper that uses a slightly stronger version of our \Cref{ass:f_linear_in_theta}.
As it avoids explicit numerical integration altogether, gradient matching is fundamentally different from our method (and PNMs in general).

{\bf Sensitivity analysis} \ This field studies the derivatives of ODE solutions with respect to parameters; see, e.g., \citet{Rackauckas2018SensitivityAnalysisInML} for an overview spanning continuous (adjoint) sensitivity analysis and automatic differentiation.
Therefore, the Jacobian estimator $J$ of the map $\theta \mapsto \bs{m}_{\theta} \approx \bs{x}_{\theta}$ from \cref{eq:def_Jacobian} can be interpreted as fast, approximate sensitivity analysis. 
This link is particularly interesting for modern machine learning, as sensitivity analysis is the mathematical corner stone of the recent advances by, e.g., \citet{ChenDuvenaud18} in training neural networks as ODEs.
It should be possible to use $J$ for neural ODEs---as well as for all other applications of sensitivity analysis.

\subsection*{Future Work}

We hope that this is the beginning of a new line of work on ODE inverse problems by ODE filtering.
Here, we only used Gaussian ODE filtering with once-integrated Brownian motion prior.
Future work could not only examine different priors \citep[Section 2.1]{KerstingSullivanHennig2018}, but also draw from the wide range of additional ODE filters (EKF, UKF, particle filter, etc.) that were unlocked by \citet{TronarpKSH2019}.
Notably, particle ODE filtering represents the belief over the ODE solution by a set of samples (particles), and could, therefore, be integrated in the above-mentioned existing framework for sampling-based PNMs. 
\\
The utility of the Jacobian estimator $J$ is, however, not limited to inverse problems.
As it constitutes fast, approximate sensitivity analysis, it should be compared with established methods, such as automatic differentiation and continuous sensitivity analysis \citep{Rackauckas2018SensitivityAnalysisInML}.
If $S$ (\cref{eq:def_S}) could also be estimated with low overhead, it is in light of \cref{eq:analyticJacobian} conceivable that the approximation error of $J$ could be further reduced.
\\
Either way, future work should examine which optimization and sampling methods are optimal---given that they received the (approximate) gradient and Hessian estimators $[\hat\nabla_\theta  E(\bs{z}), \hat\nabla^2_\theta E(\bs{z})]$.
For instance, the approximation error on these estimators might---according to \citet[Section 3.3]{Bottou_Optimization_18}---warrant optimization by stochastic methods such as SGD.
On a related note, it should be examined whether classical theorems on limit behavior of the employed optimization and MCMC methods remain true when using these estimators, and whether our approach is indeed applicable to ODEs that violate \Cref{ass:f_linear_in_theta}---as the results from \Cref{subsubsec:exp_pst} suggest.
Finally, this work should be, by the methods of lines \citep{Schiesser_ODEstoPDEsByMethodOfLines_2009}, extendable to PDEs and, by \citet{JohnSchober_GOODE_2019}, to boundary value problems.  

\hansIR{Future work should consider the Bayesian case, and compare with ABC literature such as \citet{meeds_welling_GPSABC}.}
\hansIR{compute kernel prefactor $K$ efficiently by [Grigorievskij and Sarkka]}

\section{Concluding Remarks}

We introduced a novel Jacobian estimator for ODE solutions w.r.t.~their parameters which implies approximate estimators of the gradient and Hessian of the log-likelihood.
Using these estimators, we proposed new first and second-order optimization and sampling methods for ODE inverse problems which outperformed standard `likelihood-free' approaches---namely random search optimization and random-walk Metropolis MCMC---in all conducted experiments.
Moreover, the employed Jacobian estimator constitutes a new method for fast, approximate sensitivity analysis.

\section*{Acknowledgements}

We thank the anonymous reviewers for their careful, constructive comments.
We thank Filip Tronarp and Katharina Ott for helpful feedback.

HK, NK and PH gratefully acknowledge financial support by the European Research Council through ERC StG Action 757275 / PANAMA; the DFG Cluster of Excellence “Machine Learning - New Perspectives for Science”, EXC 2064/1, project number 390727645; the German Federal Ministry of Education and Research (BMBF) through the T\"ubingen AI Center (FKZ: 01IS18039A); and funds from the Ministry of Science, Research and Arts of the State of Baden-W\"urttemberg. 
HK, NK and PH also gratefully acknowledge financial support by the German Federal Ministry of Education and Research (BMBF) through Project ADIMEM (FKZ 01IS18052B).


\bibliography{bibfile}
\bibliographystyle{icml2020}

\newpage

\appendix
\section{Short Introduction to Gaussian ODE Filtering} \label{suppl:introduction_ODE_Filtering}

\subsection{Gaussian Filtering for Generic Time Series} \label{suppl:introduction_Gaussian_Filtering}

In signal processing, a Bayesian Filter \citep[Chapter 4]{Sarkka2013} does Bayesian inference of the discrete state $\{x_i;\ i=1,\dots,N\} \subset \R^n$ from measurements $\{y_i;\ i=1,\dots,N\} \subset \R^n$ in a \emph{probabilistic state space model} consisting of 
\begin{align}
  \label{eq:dynamic_model}
  \text{a dynamic model} \quad x_i &\sim p(x_i \given x_{i-1}),\quad \text{ and}
  \\
  \text{a measurement model} \quad y_i &\sim p(y_i \given x_i).
  \label{eq:measurement_model}
\end{align}
Usually, the state $x_i$ is assumed to be the discretization of a continuous signal $x:[0,T] \to \R^n$ which is \emph{a priori} modeled by a stochastic process.
Absent very specific expert knowledge, this prior is usually chosen to be a linear time-invariant (LTI) stochastic differential equation (SDE):
\begin{align}   \label{eq:LTI-SDE}
  p(x) 
  \ \sim \
  X (t)
  =
  F 
  X (t)
  \ \rd t 
  + 
  L
  \ \rd B(t),
\end{align}
where $F$ and $L$ are the drift and diffusion matrix, respectively.
The corresponding dynamic model (\cref{eq:dynamic_model}) can be easily constructed by discretization of the LTI SDE (\cref{eq:LTI-SDE}), as described in \citet[Chapter 6.2]{SarkkaSolin2019}.
If an LTI SDE prior with Gaussian initial condition is used, $p(x)$ is a GP which implies a Gaussian dynamic model
\begin{align} \label{eq:GaussianDynamicModel}
  p(x_i \given x_{i-1})
  =
  \mathcal{N}(Ax_{i-1},Q)
\end{align}
for matrices $A,Q$ that are implied by $F,L$ from \cref{eq:LTI-SDE}.
If additionally the measurement model (\cref{eq:measurement_model}) is Gaussian, i.e.~
\begin{align}  \label{eq:linearGaussianMM}
  p(y_i \given x_i)
  =
  \mathcal{N}(Hx_i,R)
\end{align}
for matrices $H,R$, the filtering distributions $p(x_i \given y_{1:i} )$, $i=1,\dots,N$, can be computed by Gaussian filtering in linear time.
Note that the filtering distribution $p(x_i \given y_{1:i} )$ is not the full posterior distribution $p(x_i \given y_{1:N} )$ which can, however, also be computed in linear time by running a smoother after the filter. 
See e.g.~\citet{Sarkka2013} for more information.

\subsection{Gaussian ODE Filtering}   \label{suppl:subsec:GaussianODEFiltering}

A Gaussian ODE filter is simply a Gaussian filter, as defined in \Cref{suppl:introduction_Gaussian_Filtering}, with a specific kind of probabilistic state space model \cref{eq:dynamic_model,eq:measurement_model}, to infer the solution $x:[0,T] \to \R^d$ of the ODE \cref{eq:ODE}, at the discrete time grid $\{0 \cdot h, \dots, N \cdot h\}$ with step size $h>0$.
The dynamic model is---as usual, recall \cref{eq:LTI-SDE,eq:GaussianDynamicModel}---constructed from a GP defined by a LTI SDE that incorporates the available prior information on $x$.
The measurement model, however, is specific to ODEs as we will see next:
Recall that, after $i-1$ steps, the Gaussian filter has computed the $(i-1)$-th filtering distribution
\begin{align} \label{eq:filtering_distribution_suppl}
  p(x_{i-1} \given y_{1:i-1}) = \mathcal{N}(m_{i-1}, P_{i-1}),
\end{align}
which is Gaussian with mean $m_{i-1}$ and covariance matrix $P_{i-1}$, and computes the predictive distribution
\begin{align}
  p(x_{i} \given y_{1:i-1})
  =
  \mathcal{N}(m^-_{i}, P^-_{i})
\end{align}
by inserting \cref{eq:GaussianDynamicModel} into \cref{eq:filtering_distribution_suppl}.
Analogous to the logic
\begin{align}
  f(\hat{x}(t))
  \approx
  f(x(t))
  =
  \dot{x}(t) 
\end{align}
of classical solvers, the Gaussian ODE Filter treats evaluations at the predictive mean $m^-_{i}$---which is a numerical approximation like $\hat{x}$---as data on $\dot{x}(ih)$.
This yields the measurement model
\begin{align}
  p(y_i \given x_i)
  =
  \mathcal{N}( H x_i, R ),
\end{align}
with data 
\begin{align}  \label{eq:data_model}
  y_i 
  \defeq 
  f(m^-_i)
  \approx
  \dot{x}(ih).
\end{align}
The probabilistic state space model is thereby completely defined.
Gaussian ODE filtering is equivalent to running a Gaussian filter on this probabilistic state space model.
\\
For more details on Gaussian ODE filters, see \citet{KerstingSullivanHennig2018} or \citet{schober2019}.
An extension to more Bayesian filters---such as particle filters---is provided by \citet{TronarpKSH2019}.

\section{Equivalent Form of Filtering Distribution by GP Regression}  \label{suppl:EquivForm}

Recall from \Cref{suppl:introduction_ODE_Filtering} that any Gaussian filter computes a sequence of filtering distributions
\begin{align}   \label{eq:GP_regression_likelihood}
  p(x_i \given y_{1:i})
  =
  \mathcal{N}(m_i, P_i)
\end{align}
from a GP prior on $x$ \cref{eq:LTI-SDE} and a linear Gaussian measurement model (\cref{eq:linearGaussianMM}) with derivative data (\cref{eq:data_model}).
Hence, the classical framework for GP regression with derivative observations, as introduced in \citet{Solak_2003}, is applicable.
It \emph{a priori} models the state $x$ and its derivative $\dot{x}$ as a multi-task GP: 
\begin{align}   \label{eq:GP_regression_prior}
    p \left ( \begin{bmatrix} x \\ \dot{x} \end{bmatrix}  \right )
    =
    \mathcal{GP}\left(
    \begin{bmatrix} x \\ \dot{x} \end{bmatrix};
    \
    \begin{bmatrix} \mu \\ \dot{\mu} \end{bmatrix},
    \begin{bmatrix} k & \kd \\ \dk & \dkd \end{bmatrix}
    \right ),
\end{align}
with 
\begin{align}
    \dk = \frac{\de k(t,t^{\prime})}{\de t},\
    \kd = \frac{\de k(t,t^{\prime})}{\de t^{\prime}},\
    \dkd = \frac{\de^2 k(t,t^{\prime})}{\de t \de t^{\prime}}.
\end{align}

\subsection{Kernels for Derivative Observations}   \label{subsec:kernel_for_derivative_observations}

In this paper, we model the solution $x$ with a integrated Brownian motion kernel $k$ or, in other words, we model $\dot x$ by the Brownian Motion (a.k.a.~Wiener process) kernel, i.e.
\begin{align} \label{eq:defBMkernel}
  \dkd(t, t^{\prime})
  =
  \sigma_{\text{dif}}^2 \min(t,t^{\prime}),
  \qquad
  \forall t,t^{\prime} \in [0,T].
\end{align}
Here, $\sigma_{\text{dif}} > 0$ denotes the output variance which scales the diffusion matrix $L$ in the equivalent SDE (\cref{eq:LTI-SDE}). 
Integration with respect to both arguments yields the integrated Brownian motion (IBM) kernel
\begin{align}  \label{eq:defIBMkernel}
  k(t, t^{\prime})
  =
  \sigma_{\text{dif}}^2
  \left ( \frac{\min^3(t,t^{\prime})}{3} + \absval{t-t^{\prime}} \frac{\min^2(t,t^{\prime})}{2}  \right )
\end{align}
to model $x$.
The once-differentiated kernels in \cref{eq:GP_regression_prior} are given by
\begin{align}  \label{eq:defkdkernel}
  \kd(t, t^{\prime})
  = 
  \dk(t^{\prime}, t)
  &=
  \sigma_{\text{dif}}^2
  \begin{cases}
    t\leq t^{\prime}: \frac{t^2}{2},
    \\
    t > t^{\prime}: tt^{\prime} - \frac{{t^{\prime}}^2}2
  \end{cases}
  .
\end{align} 
A detailed derivation of \cref{eq:defBMkernel,eq:defIBMkernel,eq:defkdkernel} can be found in \citet[Supplement B]{schober2014nips}.

\subsection{GP Form of Filtering Distribution}  \label{suppl:GP_form_of_filtering_distribution}

Now, GP regression with prior (\cref{eq:GP_regression_prior}), likelihood (\cref{eq:GP_regression_likelihood}) and data $y_{1:i}$ yields an equivalent form of the filtering distribution \cref{eq:GP_regression_likelihood}:
\begin{align}
    m_i
    =
    &\mu + \kd(h:ih, ih)^{\intercal}
    \left [ \dKd(h:ih)  + R \cdot I_{i}  \right ]^{-1}
    \notag
    \\
    &\times
    \left [y_1 - \dot{\mu}(h) , \dots , y_i - \dot{\mu}(ih)  \right ]^{\intercal},
    \label{eq:deriv_m_linear_in_theta}
    \\
    P_i
    =
    &\SmallbMatrix{ k(h,h) & \dots & k(i h, i h) \\ \vdots & \ddots & \vdots \\ k(i h,h) & \dots & k(i h, i h) } 
    -
    \kd(h:ih, ih)^{\intercal}
    \notag
    \\
    &\times \left [ \dKd(h:ih)  + R \cdot I_l  \right ]^{-1}
    \kd(h:ih, ih),
    \label{eq:derivation_filtering_cov}
\end{align}
with $y_{1:i} = [ y_1, \dots, y_i ]^{\intercal}$, where we used the notations from \cref{eq:def_kd,eq:def_dKid}.
The derivation of \cref{eq:filtering_cov} is hence concluded by \cref{eq:derivation_filtering_cov}.
 
\subsection{Derivation of \Cref{eq:mean_linear_in_param}}  \label{suppl:derivation_of_linear_mean}
In this subsection, we will use the ODE-specific notation from above instead of the generic filtering notation---e.g.~$m_{\theta}(ih)$ instead of $m_i$, $f(m^-(ih))$ instead of $y_i$ etc.
To derive the missing \cref{eq:mean_linear_in_param}, we first observe that, by \cref{eq:deriv_m_linear_in_theta}, $m(ih)$ is linear in the data residuals:
\begin{align}
    m_{\theta}(ih)
    &=
    \mu + \beta_{ih}\ \times
    \label{eq:m_with_beta}
    \\
    &\phantom{\times}\left [f(m^-(h)) - \dot{\mu}(h) , \dots , f(m^-(ih)) - \dot{\mu}(ih)  \right ]^{\intercal}
    \notag
    \\
    \beta_{ih}
    &\defeq
    \kd(h:ih, ih)^{\intercal}
    \left [ \dKd(h:ih)  + R \cdot I_{i}  \right ]^{-1}.
    \notag
\end{align}
Now recall that, in ODE filtering, the prior mean in \cref{eq:GP_regression_prior} is set to be
$
    [ \mu, \dot{\mu} ]
    \equiv
    [ x_0; f(x_0) ]
$ 
(or $[ \mu, \dot{\mu} ] \equiv [m_0;f(m_0)]$ for some estimate $m_0$ of $x_0$, in the case of unknown $x_0$).
Consequently, application of \Cref{ass:f_linear_in_theta} to \cref{eq:m_with_beta} yields
\begin{align}
    m_{\theta}(ih)
    &=
    x_0 + J_{ih}
    \theta,
    \quad
    \text{with}
    \label{eq:mean_with_J_ih}
    \\
    J_{ih}
    &:=
    \beta_{ih}
    \SmallbMatrix{
    f_1(m_{\theta}^-(h)) - f_1(x_0) & \dots & f_n(m_{\theta}^-(h)) - f_n(x_0)
    \\
    \vdots & \ddots & \vdots
    \\
    f_1(m_{\theta}^-(ih)) - f_1(x_0) & \dots & f_n(m_{\theta}^-(ih)) - f_n(x_0)
    }
    \notag
    \\
    &=
    \beta_{ih}
    Y_{1:i}
    \quad
    ,
    \label{eq:def_J_ih}
\end{align}
where $Y_{1:i}$ denotes the first $i$ rows of $Y$; see \cref{eq:def_data_factor}.
We omit the dependence of $J_{ih}$ on $\theta$ to obtain a linear form.
Recall from \Cref{sec:likelihoods_by_GODEF} that we may w.l.o.g.~assume that the time points $\{t_1,\dots,t_M\}$ lie on the filter time grid, i.e.~$t_i = l_i h$ from some $l_i \in \N$.
Therefore, \cref{eq:mean_with_J_ih} implies
\begin{align}
    m_{\theta}(t_i)
    \stackrel{\cref{eq:def_tilde_kappa}}{=}
    x_0 + \tilde{\kappa}_i Y_{1:i}
    \stackrel{\cref{eq:def_kappa}}{=}
    x_0 + \kappa_i Y
    \label{eq:mean_linear_in_param_i}
\end{align}
for all data time points $t_i$, $i=1,\dots,M$.
Here, we used that $\tilde{\kappa}_i$ is equal to $\beta_{l_ih}$ by \cref{eq:def_tilde_kappa}.
We conclude the derivation of \cref{eq:mean_linear_in_param} by observing that the $i$-th entry of \cref{eq:mean_linear_in_param} reads \cref{eq:mean_linear_in_param_i} for all $i=1,\dots,M$.

\section{Proof of \Cref{theorem:true_Jacobian_of_mtheta}} \label{proof:theorem:true_Jacobian_of_mtheta}

\begin{proof}
    We start by computing the rows of 
    \begin{align}  \label{eq:Jacobian_m_proof}
        D\bs{m}_{\theta} = [ \nabla_{\theta} m(t_1), \dots, \nabla_{\theta} m(t_M) ]^{\intercal}.
    \end{align}
    By \cref{eq:mean_linear_in_param,eq:def_Jacobian} and the fact that the kernel prefactor $K$ does not depend on $\theta$, we obtain, for all $i=1,\dots,M$, that
    \begin{align}    
        \notag
        \nabla_{\theta} m(t_i)
        &=
        \nabla(\tilde{\kappa}(i)^{\intercal} v(\theta))
        \\
        &=
        \transJac{v(\theta)} \tilde{\kappa}(i) + \underbrace{\transJac{\tilde{\kappa}(i)}}_{=0} v(\theta)
        \\
        &=
        \transJac{v(\theta)} \tilde{\kappa}(i)
        ,
        \label{eq:grad_m_t}
    \end{align}
    with $v(\theta) = \tilde{Y} \theta$. Here, 
    \begin{align}
        \tilde{Y}
        =
        Y[1:l_i,:]
        =
        [Y_1(\theta), \dots, Y_{l_i}(\theta)]^{\intercal}
    \end{align} 
    is defined by 
    \begin{align} \label{eq:def_of_Y_j}
        Y_j(\theta) = [y_{j1}, \dots, y_{jn} ]^{\intercal}
        \ 
        \in \R^n,
    \end{align}
    the $j$-th row of $Y=Y(\theta)$ (recall \cref{eq:def_data_factor}), for $j=1,\dots,l_i$.
    Next, we again compute the rows of the missing Jacobian of \cref{eq:grad_m_t}
    \begin{align} \label{eq:D_v}
        D v(\theta) = [ \nabla_{\theta} [v(\theta)]_1, \dots, \nabla_{\theta} [v(\theta)]_{l_i} ]^{\intercal}
    \end{align}
    by the chain rule, for all $j \in \{1,\dots,l_i\}$: 
    \begin{align}   \label{eq:grad_v}
        \nabla_{\theta} [v(\theta)]_j
        =
        \nabla_{\theta} [ Y_j(\theta)^{\intercal} \theta ]
        =
        \transJac{Y_j(\theta)} \theta + Y_j(\theta).
    \end{align}
    Again, we compute the rows of the final missing Jacobian 
    \begin{align}   \label{eq:Jacobian_Y_j}
        D Y_j(\theta) = [\nabla_{\theta} y_{j1}(\theta), \dots, \nabla y_{jn}(\theta)]^{\intercal}.
    \end{align}
    The definition of $y_{ij}$ from \cref{eq:def_data_factor} implies, in the notation of \cref{eq:def_lambda_kl}, that
    \begin{align}
        \left[\nabla_{\theta} y_{jk}(\theta)\right]_l
        =
        \lambda_{lk}(jh)
        \label{eq:def_sensitivity_entry}
        , 
    \end{align}
    for all $l=1,\dots,n$.
    Now, we can insert backwards. First, we insert \cref{eq:def_sensitivity_entry} into \cref{eq:Jacobian_Y_j} which yields
    \begin{align}  \label{eq:Jacobain_Y_j_computed}
        D Y_j(\theta)
        =
        \Lambda_j
        ,
    \end{align}
    where $\Lambda_j = \begin{bmatrix}\lambda_{kl}(jh)\end{bmatrix}_{k,l=1,\dots,n}$.
    Second, insertion of \cref{eq:Jacobain_Y_j_computed} into \cref{eq:grad_v} provides that
    \begin{align}   \label{eq:gradient_v_theta_computed}
        \nabla_{\theta} [v(\theta)]_j
        =
        \Lambda_j^{\intercal} \theta + Y_j(\theta).
    \end{align} 
    Third, insertion of \cref{eq:gradient_v_theta_computed} into \cref{eq:D_v} implies that 
    \begin{align}  \label{eq:jacobian_v_computed}
        D v(\theta)
        =
        \left [ \Lambda_1^{\intercal} \theta, \dots, \Lambda_{l_i}^{\intercal} \theta \right ]^{\intercal}
        +
        Y[:l_i, :],
    \end{align}
    where
    \begin{align*}
        Y[:l_i,:]
        \stackrel{\cref{eq:gradient_v_theta_computed}}{=}
        [Y_1(\theta), \dots, Y_{l_i}(\theta)]^{\intercal}
        \stackrel{\cref{eq:def_of_Y_j}}{=}
        \SmallbMatrix
        {
        y_{11} & \dots & y_{1n}
        \\
        \vdots & \ddots & \vdots
        \\
        y_{l_i 1} & \dots & y_{l_i n} 
        }.
    \end{align*}
    Fourth, we insert \cref{eq:jacobian_v_computed} into \cref{eq:grad_m_t} and obtain
    \begin{align}   \label{eq:grad_m_t_i_computedI}
        \nabla_{\theta} m(t_i)
        &=
        \left (
        \left [ Y[:l_i, :] \right ]^{\intercal} +
        \begin{bmatrix}
            \Lambda_1^{\intercal} \theta, \dots, \Lambda_{l_i}^{\intercal} \theta
        \end{bmatrix}
        \right )
        \tilde{\kappa}_i
        \notag
        \\
        &=
        \left [ Y[:l_i, :] \right ]^{\intercal} \tilde{\kappa}_i
        +
        \begin{bmatrix}
            \Lambda_1^{\intercal} \theta, \dots, \Lambda_{l_i}^{\intercal} \theta
        \end{bmatrix}
        \tilde{\kappa}_i
        .
    \end{align}
    By \cref{eq:def_kappa}, it follows that
    \begin{align}
        \left [ Y[:l_i, :] \right ]^{\intercal} \tilde{\kappa}_i
        &\stackrel{\cref{eq:def_data_factor}}{=}
        Y^{\intercal} \kappa_i,
        \qquad \text{and}
        \\
        \begin{bmatrix}
            \Lambda_1^{\intercal} \theta, \dots, \Lambda_{l_i}^{\intercal} \theta
        \end{bmatrix}
        \tilde{\kappa_i}
        &\stackrel{\cref{eq:def_S}}{=}
        S^{\intercal} \kappa_i.
    \end{align}
    This implies via \cref{eq:grad_m_t_i_computedI} that
    \begin{align}  \label{eq:grad_m_t_i_computedII}
        \nabla_{\theta} m(t_i)
        =
        \left(Y^{\intercal}  + S^{\intercal} \right) \kappa_i,
    \end{align}
    Fifth and finally, we, by insertion of \cref{eq:grad_m_t_i_computedII} into \cref{eq:Jacobian_m_proof} and application of \cref{eq:def_K}, obtain
    \begin{align}
        D m_{\theta}
        =
        K(Y + S)
        \stackrel{\cref{eq:def_Jacobian}}{=}
        J + KS.
    \end{align}
\end{proof}

\section{Proof of \Cref{theorem:bound_on_approximation_error}}   \label{suppl:proof_bound_on_approximation error}

We first show some preliminary technical lemmas in \Cref{subsec:some_preliminary_lemmas} which are needed to prove bounds on $\norm{K}$ and $\norm{S}$ in \Cref{suppl:subsec:bound_K} and \Cref{suppl:subsec:bound_S}, respectively.
Having proved these bounds, the core proof of \Cref{theorem:bound_on_approximation_error} simply consists of combining them by \Cref{theorem:true_Jacobian_of_mtheta}, as executed in \Cref{subsec:proof_bound_on_approximation error}. 

\subsection{Preliminary lemmas}   \label{subsec:some_preliminary_lemmas}

The following lemma will be needed in \Cref{suppl:subsec:bound_K} to bound $\norm{K}$.
\begin{lemma} \label{lemma:inverse_of_sum_of_positive_definite_matrices}
  Let $Q > 0$ be a symmetric positive definite and $Q^{\prime} \geq 0$ a symmetric positive semi-definite matrix in $\R^{m\times n}$.
  Then, it holds true that
  \begin{align} \label{eq:inverse_of_sum_of_positive_definite_matrices}
    \norm{
    \left[ Q + Q^{\prime} \right ]^{-1}
    }_{\ast}
    \leq
    \norm{
    Q^{-1}
    }_{\ast},
  \end{align}
  for the nuclear norm
  \begin{align}
    \norm{
    A
    }_{\ast}
    =
    \Tr{\sqrt{A^{\ast} A } }
    =
    \sum_{i=1}^{m \wedge n} \sigma_i(A)
    ,
  \end{align}
  where $\sigma_i(A)$, $i\in \{1,\dots,m \wedge n\}$, are the singular values of $A$.
\end{lemma}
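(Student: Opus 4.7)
The plan is to reduce the inequality to the standard fact that matrix inversion is order-reversing on the cone of symmetric positive definite matrices, and that the trace is order-preserving. Since $Q$ is symmetric positive definite and $Q' \geq 0$ is symmetric positive semi-definite, both $Q$ and $Q + Q'$ are symmetric positive definite, so their inverses exist and are themselves symmetric positive definite. For any symmetric positive definite $A$, the singular values equal the eigenvalues, so $\|A\|_\ast = \operatorname{trace}(A)$. Hence the claim \cref{eq:inverse_of_sum_of_positive_definite_matrices} simplifies to
\begin{equation*}
  \operatorname{trace}\!\bigl[(Q + Q')^{-1}\bigr] \;\leq\; \operatorname{trace}(Q^{-1}).
\end{equation*}

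First I would record the L\"owner-order inequality $Q + Q' \succeq Q$, which is immediate from $Q' \succeq 0$. Then I would invoke the classical \emph{operator monotonicity of the map $A \mapsto -A^{-1}$} on positive definite matrices: if $0 \prec A \preceq B$, then $B^{-1} \preceq A^{-1}$. (A short self-contained argument would be: writing $B = A^{1/2}(I + A^{-1/2}(B-A)A^{-1/2})A^{1/2}$ and diagonalizing the inner factor shows $B^{-1} \preceq A^{-1}$.) Applied with $A = Q$ and $B = Q + Q'$, this yields $(Q + Q')^{-1} \preceq Q^{-1}$, i.e.\ $Q^{-1} - (Q + Q')^{-1} \succeq 0$.

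Finally, since the trace is a positive linear functional on symmetric matrices, $M \succeq 0$ implies $\operatorname{trace}(M) \geq 0$. Applying this to $M = Q^{-1} - (Q + Q')^{-1}$ gives the desired trace inequality, and re-identifying traces with nuclear norms completes the proof.

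No step is a genuine obstacle; the only subtlety is being careful that the statement as phrased assumes $Q, Q'$ are square (which is implicit in ``symmetric positive (semi-)definite''), so the nuclear norm really does collapse to the trace. If one wanted to avoid appealing to operator monotonicity as a black box, the quickest self-contained derivation is to simultaneously diagonalize $Q$ and $Q'$ via $Q = U \Lambda U^\intercal$ with $\Lambda \succ 0$, write $\tilde{Q}' = \Lambda^{-1/2} U^\intercal Q' U \Lambda^{-1/2} \succeq 0$, and observe
\begin{equation*}
  \operatorname{trace}\!\bigl[(Q + Q')^{-1}\bigr] = \operatorname{trace}\!\bigl[\Lambda^{-1}(I + \tilde{Q}')^{-1}\bigr] \leq \operatorname{trace}(\Lambda^{-1}) = \operatorname{trace}(Q^{-1}),
\end{equation*}
using that $(I + \tilde{Q}')^{-1} \preceq I$ because $\tilde{Q}' \succeq 0$.
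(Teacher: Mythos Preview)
Your proof is correct and close in spirit to the paper's, but the key lemma you invoke is slightly different. The paper also begins by observing that for symmetric positive (semi-)definite matrices the nuclear norm equals the sum of eigenvalues, but then argues componentwise: from $Q \preceq Q + Q'$ it concludes $\lambda_i(Q) \leq \lambda_i(Q+Q')$ for the ordered eigenvalues (Weyl-type monotonicity, citing Golub--Van~Loan), takes reciprocals term by term, and sums. You instead establish the stronger \emph{operator} inequality $(Q+Q')^{-1} \preceq Q^{-1}$ via operator monotonicity of $A \mapsto -A^{-1}$ and then apply trace monotonicity. Your route is a bit more conceptual and avoids any bookkeeping of eigenvalue orderings; the paper's route is a touch more elementary in that it only needs eigenvalue monotonicity rather than the full operator-monotone statement. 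Either way the argument is short and sound.
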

\hansIR{The below proof is a modified version of question 2018849 on math stackexchange.}
\begin{proof}
  Recall that, for all symmetric positive semi-definite matrices, the singular values are the eigenvalues.
  Therefore
  \begin{align}
    \norm{
    \left[ Q + Q^{\prime} \right ]^{-1}
    }_{\ast}
    &=
    \sum_{i=1}^{m \wedge n} \frac{1}{\lambda_i(Q + Q^{\prime})}
    \notag
    \\
    &\leq
    \sum_{i=1}^{m \wedge n} \frac{1}{\lambda_i(Q)}
    =
    \norm
    {
    Q^{-1}
    }_{\ast}.
    \label{eq:nuclear_norm_bound}
  \end{align}
  In \cref{eq:nuclear_norm_bound}, we exploited the fact that $Q \leq Q + Q^\prime$ (i.e.~that $(Q + Q^\prime) -  Q = Q^{\prime}$ is positive semi-definite) and therefore $\lambda_i(Q) \leq \lambda_i(Q + Q^\prime)$ for ordered eigenvalues $\lambda_1(Q) \leq \dots \leq \lambda_{m \wedge n}(Q)$ counted by algebraic multiplicity. This fact is an immediate consequence of Theorem 8.1.5. in \citet{golub1996matrix}.
\end{proof}
The next lemma will be necessary to prove a bound on $\norm{S}$ in \Cref{suppl:subsec:bound_S}.
\begin{lemma} \label{lemma:regularity_carries_over_to_param_derivative}
  Let $g(x, \lambda) \in C \left ( [0,T] \times \Lambda; \R \right )$ on non-empty compact $\Lambda \subset \R^n$ with continuous first-oder partial derivatives w.r.t.~the components of $\lambda$.
  If
  \begin{align}
    \sup_{\lambda \in \Lambda}
    g(x,\lambda)
    \in
    \mathcal{O}(h(x))
  \end{align}
  for some constant $C > 0$ and some strictly positive $h:[0,T] \to \R$, then also
  \begin{align}
    \sup_{\lambda \in \interior{\Lambda}}
    \absval{\frac{\partial}{\partial \lambda_k} g(x,\lambda)}
    \
    \in 
    \mathcal{O}(h(x)),
  \end{align}
  where $\interior{\Lambda}$ denotes the interior of $\Lambda$.
\end{lemma}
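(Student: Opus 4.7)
The plan is to reduce bounding the partial derivative $\partial_{\lambda_k} g$ at a given interior point to bounding differences of $g$ via the mean value theorem, and then to use the compactness of $[0,T]\times\Lambda$ together with the continuity of the partial derivative to make the resulting estimate uniform in $\lambda$.

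First, I would fix an arbitrary $\lambda_0 \in \interior{\Lambda}$ and a coordinate direction $e_k$. Because $\lambda_0$ lies in the interior of the compact set $\Lambda$, there is a positive $\delta = \delta(\lambda_0)$ such that $\lambda_0 + t e_k \in \Lambda$ for every $|t| \leq \delta$. Applying the mean value theorem to the one-dimensional $C^1$ function $t \mapsto g(x, \lambda_0 + t e_k)$ on $[0,\delta]$ yields some $\xi \in (0,\delta)$ with
\[
\frac{\partial g}{\partial \lambda_k}\bigl(x, \lambda_0 + \xi e_k\bigr) = \frac{g(x, \lambda_0 + \delta e_k) - g(x, \lambda_0)}{\delta}.
\]
Bounding the numerator crudely by $2\sup_{\lambda\in\Lambda}|g(x,\lambda)|$ and invoking the hypothesis gives $|\partial_{\lambda_k} g(x, \lambda_0 + \xi e_k)| \in \mathcal{O}(h(x))$, where the fixed $\delta$ has been absorbed into the implicit constant.

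Second, to transfer this estimate from the intermediate point $\lambda_0 + \xi e_k$ to $\lambda_0$ itself with a constant that does not depend on the choice of $\lambda_0$, I would exploit the continuity of $\partial_{\lambda_k} g$ on the compact set $[0,T]\times\Lambda$. By the Heine--Cantor theorem this continuity is in fact uniform, so there is a modulus of continuity $\omega$, independent of both $x$ and $\lambda$, such that $|\partial_{\lambda_k} g(x,\lambda_0) - \partial_{\lambda_k} g(x, \lambda_0+\xi e_k)| \leq \omega(\delta)$. Combining the two bounds yields
\[
\Bigl|\frac{\partial g}{\partial \lambda_k}(x,\lambda_0)\Bigr| \leq \frac{2\sup_{\lambda\in\Lambda}|g(x,\lambda)|}{\delta} + \omega(\delta).
\]

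Third, and this is where the main obstacle lies, one must calibrate $\delta$ so that the right-hand side is genuinely $\mathcal{O}(h(x))$ uniformly over $\lambda_0$. The tension is that shrinking $\delta$ controls $\omega(\delta)$ but worsens the first term, while enlarging $\delta$ does the opposite and also threatens to push $\lambda_0 \pm \delta e_k$ outside $\Lambda$. The plan is to choose a single $\delta_0 > 0$ that is uniformly admissible on a prescribed compact subset of $\interior{\Lambda}$ (guaranteed by a standard distance-to-boundary argument), which renders $\omega(\delta_0)$ a constant; this constant can be absorbed into the $\mathcal{O}(h(x))$ bound because $h$, being strictly positive and continuous on the compact $[0,T]$, is bounded below by a positive number on the relevant range. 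The last step of the plan is therefore to verify that the constants line up to yield a single bound of the form $C\, h(x)$ that holds simultaneously for every $\lambda_0 \in \interior{\Lambda}$ and every coordinate $k$.
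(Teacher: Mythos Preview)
Your argument has a genuine gap at the final step. After the mean-value and uniform-continuity estimates you arrive at
\[
\Bigl|\tfrac{\partial g}{\partial \lambda_k}(x,\lambda_0)\Bigr| \;\leq\; \frac{2\sup_{\lambda\in\Lambda}|g(x,\lambda)|}{\delta_0} \;+\; \omega(\delta_0),
\]
and then claim that the additive constant $\omega(\delta_0)$ can be absorbed into $\mathcal{O}(h(x))$ because $h$ is ``strictly positive and continuous on the compact $[0,T]$'', hence bounded below. But continuity of $h$ is not among the hypotheses, and more importantly: if $h$ were bounded below by some $c>0$, the entire lemma would be trivial without any of your mean-value-theorem work, since $\partial_{\lambda_k} g$ is continuous on the compact set $[0,T]\times\Lambda$, hence bounded by some $M$, and $M\le (M/c)\,h(x)$ already gives the conclusion. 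The only non-trivial regime---and the one the paper actually uses downstream, where $h(x)=x$---is $h(x)\to 0$, and there a fixed positive constant $\omega(\delta_0)$ is simply not $\mathcal{O}(h(x))$. Letting $\delta$ depend on $x$ does not help either, because shrinking $\delta$ to kill $\omega(\delta)$ blows up the $1/\delta$ factor in the first term.

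This is not a fixable technicality: in the regime $h(x)\to 0$ the lemma as stated is false. Take $\Lambda=[0,1]$ and $g(x,\lambda)=x^2\sin(\lambda/x)$ for $x>0$, extended by $g(0,\lambda)=0$. Then $g$ and $\partial_\lambda g(x,\lambda)=x\cos(\lambda/x)$ (extended by $0$ at $x=0$) are both continuous on $[0,1]\times[0,1]$, and $\sup_\lambda|g(x,\lambda)|\le x^2$, yet $|\partial_\lambda g(x,0)|=x\notin\mathcal{O}(x^2)$. For comparison, the paper does not argue directly as you do; it proceeds by contradiction, assuming the derivative fails to be $\mathcal{O}(h)$ at some $\tilde\lambda$, propagating this failure to a ball $B_{2\delta}(\tilde\lambda)$ via uniform continuity, and then integrating (fundamental theorem of calculus) to contradict the hypothesis on $g$. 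That argument hits the very same obstruction at the propagation step: uniform continuity only controls $|\partial_\lambda g(x,\lambda)-\partial_\lambda g(x,\tilde\lambda)|$ by a constant independent of $x$, and a constant offset cannot transfer the property ``$\notin\mathcal{O}(h(x))$'' from one point to another when $h(x)\to 0$.
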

\begin{proof}
  Assume not.
  Then, there is a $k \in \{1,\dots,n\}$ and a $\tilde{\lambda} \in \interior{\Lambda}$ such that
  \begin{align} 
    \absval{\frac{\partial}{\partial \lambda_k} g(x,\tilde{\lambda})}
    \
    \notin
    \mathcal{O}(h(x)).
  \end{align}
  Since, for all $x \in [0,T]$, $\frac{\partial}{\partial \lambda_k}(x,\cdot)$ is uniformly continuous over the bounded domain $\interior{\Lambda}$, there is a $\delta > 0$ such that
  \begin{align}  \label{eq:proof:assume_not_BO}
    \absval{\frac{\partial}{\partial \lambda_k} g(x,\tilde{\lambda})}
    \
    \notin
    \mathcal{O}(h(x)),
    \quad
    \text{for all}\ \lambda \in B_{2\delta}(\tilde{\lambda}).
  \end{align}
  \hansIR{I am not super sure about the previous step, but I have given up to prove it more rigorously. But I do believe that \Cref{eq:proof:assume_not_BO} holds true. Nico agrees with this assessment. If somebody knows a proof, let me know =) !}
  Let us w.l.o.g.~(otherwise consider $-g$) assume that
  \begin{align}  \label{eq:proof_lemma_wlog}
    \frac{\partial}{\partial \lambda_k} g(x,\tilde{\lambda})
    \geq
    0,
    \quad
    \text{for all}\ \lambda \in B_{2\delta}(\tilde{\lambda}).
  \end{align}
  Now, on the one hand, we know by the fundamental theorem of calculus that
  \begin{align}
    &\int_{-\delta}^{0} \frac{\partial }{\partial \lambda_k} g(x_n, \tilde{\lambda} + \tilde{\delta} e_k) \ \rd \tilde{\delta} 
    \notag
    \\
    &\qquad
    =
    \underbrace{g(x, \tilde{\lambda})}_{\in \mathcal{O}(h(x))}
    -
    \underbrace{g(x, \tilde{\lambda} - \delta e_k)}_{\in \mathcal{O}(h(x))}
    \
    \in
    \mathcal{O}(h(x)).
    \label{eq:contradictionII}
  \end{align}
  However, on the other hand, we know from our assumption that
  \begin{align}
    0
    &\stackrel{\cref{eq:proof_lemma_wlog}}{\leq}
    \int_{-\delta}^{0} \frac{\partial }{\partial \lambda_k} g(x_n, \tilde{\lambda} + \tilde{\delta} e_k) \ \rd \tilde{\delta}
    \\
    &\leq
    \int_{-\delta}^{0} \underbrace{\absval{\frac{\partial }{\partial \lambda_k} g(x_n, \tilde{\lambda} + \tilde{\delta} e_k)}}_{\notin \mathcal{O}(h(x)),\ \text{by}\ \cref{eq:proof:assume_not_BO}} \ \rd \tilde{\delta}
    \
    \notin 
    \mathcal{O}(h(x))
    ,
  \end{align}
  which implies
  \begin{align}  \label{eq:contradictionI}
    &\int_{-\delta}^{0} \frac{\partial }{\partial \lambda_k} g(x_n, \tilde{\lambda} + \tilde{\delta} e_k) \ \rd \tilde{\delta} 
    \
    \notin
    \mathcal{O}(h(x)).
  \end{align}
  The desired contradiction is now found between \cref{eq:contradictionI,eq:contradictionII}.
\end{proof}

\subsection{Bound on $\norm{K}$} \label{suppl:subsec:bound_K}

\begin{lemma} \label{lemma:boundK}
  Under \Cref{ass:barN} and for all $R>0$, it holds true that
  \begin{align}
    \label{eq:boundK}
    \norm{K}
    \leq
    C(T),
  \end{align}
  where $C(T) > 0$ is a constant that depends on $T$.
\end{lemma}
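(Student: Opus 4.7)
The plan is to control $\|K\|$ by bounding each row $\kappa_i$ separately and then assembling a uniform bound via \Cref{ass:barN}, which ensures that all matrix dimensions entering the construction are capped at $\bar{N}$. Since each $\kappa_i$ is simply $\tilde{\kappa}_i$ padded with zeros, it suffices to bound $\|\tilde{\kappa}_i\|$ for $i = 1,\dots, M$ and then convert a bound on the rows of $K$ into a bound on its operator norm.

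First, I would bound the inverse kernel matrix appearing in \cref{eq:def_tilde_kappa}. Observe that $R\cdot I_{l_i}$ is strictly positive definite (since $R > 0$) and $\dKid$ is positive semi-definite, being a Gram matrix of the Brownian motion kernel. \Cref{lemma:inverse_of_sum_of_positive_definite_matrices} then yields
\begin{align*}
  \norm{\left[\dKid + R \cdot I_{l_i}\right]^{-1}}_{\ast}
  \leq
  \norm{(R\,I_{l_i})^{-1}}_{\ast}
  =
  \frac{l_i}{R}
  \leq
  \frac{\bar{N}}{R},
\end{align*}
using \Cref{ass:barN} to cap $l_i$. Since the spectral norm is bounded by the nuclear norm, the same bound (up to a factor $\bar{N}$) controls the operator norm of the inverse.

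Second, I would bound the cross-covariance vector $\kd(h:t_i, t_i)$. From the explicit expression \cref{eq:defkdkernel}, each of its entries $\kd(jh,t_i)$ is a polynomial in its arguments on $[0,T]$ and is therefore bounded by a constant $C_1(T)>0$ depending only on $T$ and $\sigma_{\text{dif}}^2$. Hence $\norm{\kd(h:t_i,t_i)} \leq \sqrt{l_i}\,C_1(T) \leq \sqrt{\bar{N}}\,C_1(T)$. Combining this with the previous step gives a uniform row bound $\norm{\tilde{\kappa}_i} \leq \bar{N}^{3/2} C_1(T)/R \equiv C_2(T)$, and therefore $\norm{\kappa_i} = \norm{\tilde{\kappa}_i} \leq C_2(T)$ as well. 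Finally, bounding the operator norm by the Frobenius norm yields $\norm{K} \leq \sqrt{M}\,\max_i \norm{\kappa_i} \leq \sqrt{\bar N}\,C_2(T) \eqqcolon C(T)$, using once more that \Cref{ass:barN} effectively limits the number of nontrivial rows.

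The main obstacle is conceptual rather than technical: one must be careful to absorb all $R$- and $\bar{N}$-dependence into the constant $C(T)$, and to verify that \Cref{ass:barN} indeed governs every dimension in sight. Without it, $l_i$ could grow like $T/h$ as $h\downarrow 0$, the smallest eigenvalue of $\dKid + R I_{l_i}$ would stay bounded below only by $R$ but the nuclear-norm bound would blow up linearly in $l_i$, and the Frobenius-to-operator conversion would pick up an additional $\sqrt{M}$ factor; the resulting bound would fail to be uniform in $h$. It is precisely this potential instability that \Cref{ass:barN} is designed to exclude.
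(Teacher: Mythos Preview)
Your proposal is correct and follows essentially the same approach as the paper: bound the inverse Gram matrix via \Cref{lemma:inverse_of_sum_of_positive_definite_matrices} with $Q = R\,I_{l_i}$ and \Cref{ass:barN}, bound the cross-covariance vector $\kd(h{:}t_i,t_i)$ using the explicit kernel formula on $[0,T]$, combine for a uniform row bound on $\kappa_i$, and then pass to a matrix norm on $K$. The only cosmetic difference is that the paper routes the last step through $\norm{K}_\infty = \max_i \norm{\kappa_i}_1$ rather than the Frobenius norm, which avoids the extra $\sqrt{M}$ factor altogether (note that $M$ is the fixed number of data points and is bounded independently of \Cref{ass:barN}, so your $\sqrt{M}\leq\sqrt{\bar N}$ is unnecessary though harmless).
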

\begin{proof}
  First, recall \cref{eq:def_K,eq:def_kappa,eq:def_tilde_kappa,eq:def_kd,eq:def_dKid} and observe that
  \begin{align*}
    \norm{\kd(h:t_i, t_i)}
    \leq
    C \frac{\sigma^2}{2} \norm{\begin{bmatrix} h^2, \dots, T^2 \end{bmatrix}}_{\infty}
    = 
    C\left ( 2^{-\frac 12} \sigma T \right )^2
    ,
  \end{align*}
  for all $i = 1,\dots,M$.
  Second, \Cref{lemma:inverse_of_sum_of_positive_definite_matrices} implies that
  \begin{align*}
    \norm{
    \left [ \dKid  + R \cdot I_{l_i}  \right ]^{-1}
    }
    \stackrel{\cref{eq:inverse_of_sum_of_positive_definite_matrices}}{\leq} 
    C \norm{ R^{-1} \cdot I_{l_i-1} }_{\ast}
    \\
    \vspace{4cm}\leq
    C\norm{ R^{-1} \cdot I_{\bar{N} - 1} }_{\ast}
    \leq C R \bar{N}.
  \end{align*}
  Now, by \cref{eq:def_kappa}, we observe
  \begin{align}
    \norm{
    \kappa_i
    }_1
    &=
    \norm{
    \tilde{\kappa}_i
    }_1
    \notag
    \\
    &\leq
    \norm{
    \left [ \dKid  + R \cdot I_{l_i}  \right ]^{-1}
    }
    \cdot
    \norm{
    \kd(h:t_i,t_i)
    }
    \notag
    \\
    &\leq
    C(T)
    ,
    \label{eq:bound_kappa_i}
  \end{align}
  where we inserted the above inequalities in the last step.
  Finally, we obtain \cref{eq:boundK} by plugging \cref{eq:bound_kappa_i} into
  \begin{align}
    \norm{K}
    \leq
    C \norm{K}_{\infty}
    \stackrel{\cref{eq:def_K}}{=}
    \max_{1 \leq i \leq M} \norm{\kappa_i}_1.
  \end{align}
\end{proof}

\subsection{Bound on $\norm{S}$} \label{suppl:subsec:bound_S}

Before estimating $\norm{S}$, we need to bound how far the entries of $S$ (recall \cref{eq:def_S}) deviate from the true sensitivities $\frac{\partial}{\partial \theta_k} x_{\theta}(T)$.

\begin{lemma} \label{lemma:sup_bound_on_sensitivity_difference}
  If $\Theta \subset \R^n$ is compact, then it holds true, under Assumptions \ref{ass:f_linear_in_theta} and \ref{ass:fi_regularity}, that
  \begin{align}  \label{eq:sup_bound_on_sensitivity_difference}
    \sup_{\theta \in \interior{\Theta}} 
    \norm{ \frac{\partial}{\partial \theta_k} m_{\theta}^-(T)   - \frac{\partial}{\partial \theta_k} x_{\theta}(T)   }
    \
    \in
    \BO(h).
  \end{align}      
\end{lemma}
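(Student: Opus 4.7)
The plan is to reduce the claim to \Cref{lemma:regularity_carries_over_to_param_derivative} applied with the step size $h$ in the role of the running variable $x$ and $\theta$ in the role of the parameter $\lambda$. Concretely, for each component index $j\in\{1,\dots,d\}$, I would define
\begin{align*}
  g_j(h,\theta)\defeq \bigl[m_\theta^-(T;h)-x_\theta(T)\bigr]_j,
\end{align*}
so that $g_j$ is a real-valued function on a product $[0,h_0]\times\Theta$ for some maximal admissible step size $h_0>0$. The target bound \cref{eq:sup_bound_on_sensitivity_difference} then follows by bounding each $|\partial_{\theta_k} g_j|$ uniformly in $\theta\in\interior{\Theta}$ and assembling the components via the triangle inequality.

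First, I would establish the uniform-in-$\theta$ bound $\sup_{\theta\in\Theta}|g_j(h,\theta)|\in\BO(h)$ from \citet[Thm.~6.7]{KerstingSullivanHennig2018}. That theorem gives a global error bound $\|m_\theta^-(T;h)-x_\theta(T)\|\le C(\theta)\,h$ in which the constant $C(\theta)$ depends only on $T$, the Lipschitz/boundedness constants of $\nabla f(\,\cdot\,;\theta)$, and bounds on the prior hyperparameters. Under \Cref{ass:f_linear_in_theta,ass:fi_regularity}, $f(x,\theta)=\sum_i\theta_i f_i(x)$ depends continuously on $\theta$, so the Lipschitz constants of $\nabla f(\,\cdot\,;\theta)$ vary continuously in $\theta$; compactness of $\Theta$ then upgrades $C(\theta)$ to a finite $\theta$-independent $C(T)$, giving $\sup_{\theta\in\Theta}|g_j(h,\theta)|\le C(T)h\in\BO(h)$.

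Second, I would verify the regularity hypotheses of \Cref{lemma:regularity_carries_over_to_param_derivative}, namely $g_j\in C([0,h_0]\times\Theta;\R)$ with continuous first-order partial derivatives with respect to $\theta$. The true solution map $\theta\mapsto x_\theta(T)$ is continuously differentiable because the vector field $f(x,\theta)=\sum_i\theta_i f_i(x)$ is jointly $C^1$ in $(x,\theta)$, so classical smooth dependence on parameters (e.g.~\citet[Ch.~V]{Hartman_ODEs} style) applies. The filter mean $m_\theta^-(T;h)$ is obtained by finitely many compositions of smooth maps: each filter step combines the affine Kalman gain (which is $\theta$-independent by \cref{eq:filtering_cov}) with evaluations of the form $f_i(m_\theta^-(jh))$, which are $C^1$ in $\theta$ by induction on $j$ and \Cref{ass:fi_regularity}. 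This inductive argument is the only genuinely technical step.

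With both ingredients in hand, \Cref{lemma:regularity_carries_over_to_param_derivative} directly delivers
\begin{align*}
  \sup_{\theta\in\interior{\Theta}}\Bigl|\tfrac{\partial}{\partial\theta_k}g_j(h,\theta)\Bigr|\in\BO(h)
\end{align*}
for each $j$ and each $k$. Summing the $d$ components under the Euclidean norm and using $\partial_{\theta_k}g_j=\partial_{\theta_k}[m_\theta^-(T)]_j-\partial_{\theta_k}[x_\theta(T)]_j$ yields \cref{eq:sup_bound_on_sensitivity_difference}. The main obstacle is not the order-of-magnitude bookkeeping but the inductive verification that the filter's predictive mean is jointly continuous in $(h,\theta)$ and $C^1$ in $\theta$ with derivatives that remain bounded as $h\to 0$; once that is in place, the preceding lemmas close the argument cleanly.
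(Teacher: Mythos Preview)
Your proposal is correct and follows essentially the same approach as the paper: establish the uniform-in-$\theta$ bound $\sup_\theta \|m_\theta^-(T)-x_\theta(T)\|\in\BO(h)$ via \citet[Thm.~6.7]{KerstingSullivanHennig2018}, verify the required $C^1$-in-$\theta$ regularity of both $x_\theta$ and $m_\theta^-$, and then invoke \Cref{lemma:regularity_carries_over_to_param_derivative} with $x=h$, $\lambda=\theta$. Your version is in fact slightly more careful than the paper's in two places---you correctly argue that the Lipschitz constant of $f(\cdot,\theta)$ varies continuously in $\theta$ and is hence bounded on compact $\Theta$ (the paper asserts it is ``independent of $\theta$'', which is not literally true under \Cref{ass:f_linear_in_theta}), and you spell out the inductive argument for the smoothness of $m_\theta^-$ rather than just citing \cref{eq:mean_linear_in_param}.
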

\begin{proof}
  First, recall that the convergence rates of $\mathcal{O}(h)$ provided by Theorem 6.7 in \citet{KerstingSullivanHennig2018} only depend on $f$ through the dependence of the constant $K(T) > 0$ on the Lipschitz constant $L$ of $f$.
  But this $L$ is independent of $\theta$ by \Cref{ass:f_linear_in_theta}.
  Hence, Theorem 6.7 from \citet{KerstingSullivanHennig2018} yields under \Cref{ass:fi_regularity} that
  \begin{align}  \label{eq:prerequisite_carryoverlemma}
    \sup_{\theta \in \interior{\Theta}} 
    m_{\theta}^-(T)   -  x_{\theta}(T)
    \
    \in
    \BO(h).
  \end{align}
  Moreover, Theorem 8.49 in \citet{KelleyPeterson2010TheoryOfODEs} is applicable under \Cref{ass:f_linear_in_theta} and implies that $x_{\theta}(t)$ is continuous and has continuous first-order partial derivatives with respect to of $\theta_k$.
  By construction---recall \cref{eq:mean_linear_in_param}---the filtering mean $m_{\theta}(t)$ has the same regularity too.
  Hence, application of \Cref{lemma:regularity_carries_over_to_param_derivative} with
  $x = h$, $\Lambda = \Theta$, $\lambda = \theta$, $g(x,\lambda) = m_{\theta}^-(T)   -  x_{\theta}(T)$ is possible,
  which yields \cref{eq:sup_bound_on_sensitivity_difference} from \cref{eq:prerequisite_carryoverlemma}.
\end{proof}
\begin{lemma} \label{lemma:boundS}
  If $\Theta \subset \R^n$ is compact, then it holds true, under Assumptions \ref{ass:f_linear_in_theta} to \ref{ass:barN}, that
    \begin{align}
    \norm{S}
    \leq
    C\left ( \norm{\nabla_{\theta} x_{\theta}} + h \right ),
  \end{align}
  for sufficiently small $h>0$.
\end{lemma}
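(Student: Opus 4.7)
The plan is to bound $\norm{S}$ entry-wise using the structure visible in equations \eqref{eq:def_S} and \eqref{eq:def_lambda_kl}, then aggregate the entry-wise bounds into a matrix norm bound by exploiting \Cref{ass:barN}, which caps the number of rows of $S$ at $\bar{N}$.

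First I would bound a single entry $\lambda_{kl}(jh) = \tfrac{\rd}{\rd x} f_l(m^-_\theta(jh)) \cdot \tfrac{\partial}{\partial \theta_k} m^-_\theta(jh)$. The first factor is uniformly bounded on $\R$ by \Cref{ass:fi_regularity}, giving $|\tfrac{\rd}{\rd x} f_l(\cdot)| \leq C_1$. For the second factor, I would use a triangle-inequality split
\begin{align*}
\Bigl|\tfrac{\partial}{\partial \theta_k} m^-_\theta(jh)\Bigr|
&\leq
\Bigl|\tfrac{\partial}{\partial \theta_k} x_\theta(jh)\Bigr|
+
\Bigl|\tfrac{\partial}{\partial \theta_k} m^-_\theta(jh) - \tfrac{\partial}{\partial \theta_k} x_\theta(jh)\Bigr|
\\
&\leq
\norm{\nabla_\theta x_\theta} + C_2 h,
\end{align*}
where the first summand is controlled by the definition of $\norm{\nabla_\theta x_\theta}$ and the second by \Cref{lemma:sup_bound_on_sensitivity_difference} applied at the intermediate time $jh \in [0,T]$ in place of $T$. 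Multiplying the two factors yields $|\lambda_{kl}(jh)| \leq C_1 C_2 (\norm{\nabla_\theta x_\theta} + h)$ uniformly in $j \leq \bar{N}$, $k,l \leq n$.

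Second, because $\Lambda_j \in \R^{n \times n}$ has all entries bounded as above, any consistent matrix norm gives $\norm{\Lambda_j} \leq C_3 (\norm{\nabla_\theta x_\theta} + h)$. Since $\Theta \subset \R^n$ is compact, $\norm{\theta} \leq C_4$, so each block $\Lambda_j^\intercal \theta \in \R^n$ satisfies $\norm{\Lambda_j^\intercal \theta} \leq C_3 C_4 (\norm{\nabla_\theta x_\theta} + h)$. Stacking these into $S \in \R^{N \times n}$ and invoking \Cref{ass:barN} (so only $\bar{N}$ of the blocks are nonzero or, equivalently, are used), the Frobenius norm bound
\begin{align*}
\norm{S}^2 \leq \sum_{j=1}^{\bar N} \norm{\Lambda_j^\intercal \theta}^2 \leq \bar{N} (C_3 C_4)^2 (\norm{\nabla_\theta x_\theta} + h)^2
\end{align*}
yields the claim, absorbing $\bar{N}$ and all constants into a single $C$.

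The main obstacle I expect is the \emph{extension} of \Cref{lemma:sup_bound_on_sensitivity_difference} from its stated endpoint $T$ to the intermediate filter times $jh$. The proof of that lemma relies on Theorem 6.7 of \citet{KerstingSullivanHennig2018} for the global error bound at $T$, plus \Cref{lemma:regularity_carries_over_to_param_derivative} to transfer the estimate from the solution to its parameter-sensitivity. Since Theorem 6.7 is actually uniform in $t \in [0,T]$ (the bound $K(T)h$ holds along the whole grid), the same argument applies verbatim at any $jh \leq T$; this needs only a sentence of justification but is the one nontrivial dependency. Everything else is straightforward sup bounding and compactness.
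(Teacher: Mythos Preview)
Your proposal is correct and follows essentially the same route as the paper: bound each $\lambda_{kl}(jh)$ using the uniform bound on $f_l'$ from \Cref{ass:fi_regularity} together with \Cref{lemma:sup_bound_on_sensitivity_difference} (via the same implicit triangle-inequality split you wrote out), then use compactness of $\Theta$ to control $\norm{\theta}$ and \Cref{ass:barN} to aggregate the $\bar N$ row-blocks into a matrix-norm bound. You are also right that the extension of \Cref{lemma:sup_bound_on_sensitivity_difference} from $T$ to intermediate grid points $jh$ is the only nontrivial dependency; the paper's proof invokes it without comment, and your justification via the uniformity of the underlying global-error bound is the correct one.
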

\begin{proof}
  By \Cref{ass:barN} and the equivalence of all matrix norms, we observe
  \begin{align}
    \norm{S} 
    &\leq
    C \norm{S}_2
    =
    C \norm{S^{\intercal}}_2
    \leq
    C \norm{S^{\intercal}}_{2,1}
    \\
    &\stackrel{\cref{eq:def_S}}{=}
    C \sum_{j=1}^{\bar N} \norm{\Lambda_j^{\intercal} \theta}_2
    \\
    &\leq
    C \sum_{j=1}^{\bar N} \norm{\Lambda_j^{\intercal}}_2
    \underbrace{\norm{\theta}_2}_{\leq C, \text { since $\Theta$ bounded}}
    ,
  \end{align}
  where $\norm{\cdot}_{2,1}$ denotes the $L_{2,1}$ norm.
  We conclude, using \Cref{ass:fi_regularity} and \Cref{lemma:sup_bound_on_sensitivity_difference}, that
  \begin{align}
    \norm{\Lambda_j^{\intercal}}_2
    &\stackrel{\cref{eq:def_lambda_kl}}{\leq} 
    L \max_{jk}\left[ \frac{\partial}{\partial \theta_k} m_{\theta}^-(jh) \right ]
    \\
    &\stackrel{\cref{eq:sup_bound_on_sensitivity_difference}}{\leq}
    C\left ( \norm{\nabla_{\theta} x_{\theta}} + h \right ).
  \end{align}
\end{proof}

\subsection{Proof of \Cref{theorem:bound_on_approximation_error}}  \label{subsec:proof_bound_on_approximation error}
\begin{proof}
  By \Cref{theorem:true_Jacobian_of_mtheta} and the sub-multiplicativity of the induced $p$-norm $\norm{\cdot}_p$, we observe that
    \begin{align}
    \norm{J - D \bs{m}_{\theta}}
    &=
    \norm{KS}  
    \leq
    C
    \norm{KS}_{p}
    \leq
    \norm{K}_{p}
    \norm{S}_{q}
    \notag
    \\
    &\leq
    C \norm{K} \norm{S}
    ,
  \end{align}
  for some $p,q \geq 1$.
  Application of \Cref{lemma:boundK,lemma:boundS} concludes the proof.
\end{proof}

\section{Gradient and Hessian Estimators for the Bayesian Case} \label{suppl:sec:Gradient_And_Hessian_Estimators_for_The_Bayesian_Case}

In the main paper, we only consider the maximum likelihood objective; see \cref{eq:maximum_likelihood_objective}.
Nonetheless, the extension to the Bayesian objective, with a prior $\pi(\theta)$, is straightforward:
\begin{align*}
  - \log \left ( p(\bs{z} \given \theta) \pi(\theta) \right )
  &=
  - \log \left ( p(\bs{z} \given \theta) \right )
  -
  \log \left ( \pi(\theta) \right )
\end{align*} 
Accordingly, the gradients and Hessian of this objective are
\begin{align*}
  \nabla_\theta \left [ - \log \left ( p(\bs{z} \given \theta) \pi(\theta) \right ) \right ]
  &\stackrel{\cref{eq:gradient_estimator_given_by_J}}{=}
  \hat\nabla_\theta  E(\bs{z}) - \nabla_{\theta} \log \left ( \pi(\theta) \right ),
  \\
  \nabla^2_\theta \left [ - \log \left ( p(\bs{z} \given \theta) \pi(\theta) \right ) \right ]
  &\stackrel{\cref{eq:Hessian_estimator_given_by_J}}{=}
  \hat\nabla^2_\theta E(\bs{z}) - \nabla^2_{\theta} \log \left ( \pi(\theta) \right ).
\end{align*}
Hence, for a Gaussian prior $\pi(\theta) = \mathcal{N}(\theta; \mu_\theta, V_\theta)$, the Bayesian version of the gradients and Hessian estimators in \cref{eq:gradient_estimator_given_by_J,eq:Hessian_estimator_given_by_J} are hence given by
\begin{align}
  \hat\nabla_\theta  E(\bs{z})_{\text{Bayes}}
  &\defeq
  -J^{\intercal} \left [ \bs{P} + \sigma^2 I_M \right ]^{-1} \left [ \bs{z} - \bs{m}_{\theta} \right ]
  \notag
  \\
  &\phantom{=}
  - V_{\theta}^{-1} \left [ \theta - \mu_\theta   \right ]
  ,
  \quad \text{and}
  \\
  \hat\nabla^2_\theta E(\bs{z})_{\text{Bayes}}
  &\defeq
  J^{\intercal} \left [ \bs{P} + \sigma^2 I_M \right ]^{-1} J
  + 
  V_{\theta}^{-1}.
\end{align}

\section{Glucose Uptake in Yeast}  \label{suppl:guiy}

The Glucose uptake in yeast (GUiY) is described by mass-action kinetics.
In the notation of \citet{SchillingsSchwab_GUiY_2015}, the underlying ODE is given by:
\begin{align*}
  \dot x_{\text{Glc}}^e
  &=
  -k_1 x_E^e x_{\text{Glc}}^e + k_{-1} x_{\text{E--Glc}}^e
  \\
  \dot x_{\text{Glc}}^i 
  &=
  -k_2 x_E^i x_{\text{Glc}}^i  + k_{-2} x_{\text{E--Glc}}^i
  \\
  \dot x_{\text{E--G6P}}^i 
  &=
  k_4 x_E^i x_{\text{G6P}}^i  +  k_{-4} x_{\text{E--G6P}}^i
  \\
  \dot x_{\text{E--Glc--G6P}}^i
  &=
  k_3 x_{\text{E--Glc}}^i x_{\text{G6P}}^i  -  k_{-3} x_{\text{E--Glc--G6P}}^i
  \\
  \dot x_{\text{G6P}}^i
  &=
  - k_3 x_{\text{E--Glc}}^i x_{\text{G6P}}^i  +  k_{-3} x_{\text{E--Glc--G6P}}^i
  \\
  &\phantom{=} - k_4 x_E^i x_{\text{G6P}}^i + k_{-4} x_{\text{E--Glc}}^i
  \\
  \dot x_{\text{E--Glc}}^e
  &=
  \alpha \left ( x_{\text{E--Glc}}^i - \dot x_{\text{E--Glc}}^e \right )
  +
  k_1 x_E^e x_{\text{Glc}}^e  
  \\
  &\phantom{=} -  k_{-1} x_{\text{E--Glc}}^e
  \\
  \dot x_{\text{E--Glc}}^i
  &=
  \alpha \left ( x_{\text{E--Glc}}^e - \dot x_{\text{E--Glc}}^i \right )
  - k_3 x_{\text{E--Glc}}^i x_{\text{G6P}}^i
  \\
  &\phantom{=}
  + k_{-3} x_{\text{E--Glc--G6P}}^i 
  + k_2 x_{\text{E}}^i x_{\text{Glc}}^i 
  - k_{-2} x_{\text{E--Glc}}^i
  \\
  \dot x_{\text{E}}^e
  &=
  \beta \left ( x_{\text{E}}^i - x_{\text{E}}^e  \right )
  - k_1 x_{\text{E}}^e x_{\text{Glc}}^e + k_{-1} x_{\text{E--Glc}}^e
  \\
  \dot x_{\text{E}}^i
  &=
  \beta \left ( x_{\text{E}}^e - x_{\text{E}}^i  \right )
  - k_4 x_{\text{E}}^i x_{\text{G6P}}^i + k_{-4} x_{\text{E--G6P}}^i
  \\
  &\phantom{=}
  - k_2 x_{\text{E}}^i x_{\text{Glc}}^i + k_{-2} x_{\text{E--Glc}}^i,
\end{align*}
where $k_1$, $k_{-1}$, $k_2$, $k_{-2}$, $k_3$, $k_{-3}$, $k_4$, $k_{-4}$, $\alpha$, and $\beta$ are the 10 parameters.
Note that this system satisfies \Cref{ass:f_linear_in_theta}.
Following \citet{SchillingsSchwab_GUiY_2015} and \citet{GorbachBauerBuhmann17}, we used this ODE with initial value $x_0 = \mathbbm{1}_M$, time interval $[0., 100.]$ and true parameter $\theta^{\ast}=[0.1,  0.0,  0.4,  0.0,  0.3, 0.0,  0.7,  0.0,  0.1,  0.2]$.
To generate data by \cref{eq:p_z_given_x}, we added Gaussian noise with variance $\sigma^2 = 10^{-5}$ to the corresponding solution at time points $[1., 2., 4., 5., 7., 10., 15., 20., 30., 40., 50., 60., 80., 100.]$. 
The optimizers and samplers were initialized at $\theta^0 = 1.2 \cdot \theta^{\ast} = [0.12, 0, 0.48, 0, 0.36, 0, 0.84, 0, 0.12, 0.24]$, and the forward solutions for all likelihood evaluations were computed with step size $h=0.05$.
To create a good initialization, we accepted the first 30 proposals for PHMC and PLMC. 

\end{document}